\def\endthebibliography{%
  \def\@noitemerr{\@latex@warning{Empty `thebibliography' environment}}%
  \endlist
}
\providecommand{\leftsquigarrow}{%
  \mathrel{\mathpalette\reflect@squig\relax}%
}
\newcommand{\reflect@squig}[2]{%
  \reflectbox{$\m@th#1\rightsquigarrow$}%
}
\newcounter{algorithmsb}
\newtheorem{Alg@orithmsb}[algorithmsb]{Algorithmsb}
\newlength{\algorithmsbindent}
\newcommand{\newalgname}[1]{%
  \renewcommand{\ALG@name}{#1}%
}
\newcommand {\C} {{\rm I\kern-5.5pt C}}
\renewcommand{\arraystretch}{1}
\def\centerhack#1{\hbox to 0pt{\hss\footnotesize #1\hss}}
\def\centerhackn#1{\hbox to 0pt{\hss #1\hss}}
\def\dchack#1{\vbox to 0pt{\vss{\hbox to 0pt{\hss#1\hss}}\vss}}
\newcounter{subeqn} %
\newcounter{mysub}
\newtheorem{lem}{Lemma}
\newtheorem{assumption}{Assumption}
\newtheorem*{proposition1.1}{Proposition 1.1}
\newtheorem*{proposition1.2}{Proposition 1.2}
\newtheorem*{proposition1.3}{Proposition 1.3}
\newtheorem*{proposition2.1}{Proposition 2.1}
\newtheorem*{proposition2.2}{Proposition 2.2}
\begin{document}

%
% paper title
% Titles are generally capitalized except for words such as a, an, and, as,
% at, but, by, for, in, nor, of, on, or, the, to and up, which are usually
% not capitalized unless they are the first or last word of the title.
% Linebreaks \\ can be used within to get better formatting as desired.
% Do not put math or special symbols in the title.
% \title{Computation Offloading for
% Edge Computing
% %Mobile-Edge Cloud Computing
% in %Emerging 
%  Wireless Networks with Intelligent Surfaces}

\title{Time Minimization in Hierarchical Federated Learning}
%\title{K-Connectivity in Secure Wireless Sensor Networks under an ON/OFF Channel Model}

%(the details are given in Section VI of the full version \cite{full} due to space limitation)

%
%where ``for any sufficiently large $n$'' means ``for any $n \geq
%N_0$, where $N_0$ is a selected constant''

%
%
% author names and IEEE memberships
% note positions of commas and nonbreaking spaces ( ~ ) LaTeX will not break
% a structure at a ~ so this keeps an author's name from being broken across
% two lines.
% use \thanks{} to gain access to the first footnote area
% a separate \thanks must be used for each paragraph as LaTeX2e's \thanks
% was not built to handle multiple paragraphs
%

% \author[1]{Chang Liu}
% \author[1]{Terence Jie Chua}
% \author[2]{Jun Zhao}
% \affil[1]{School of Computer Science \& Engineering, Energy Research Institute @ NTU, \authorcr Interdisciplinary Graduate Programme, Nanyang Technological University, \authorcr Emails: \{liuc0063,terencej001\}@e.ntu.edu.sg}
% \affil[2]{School of Computer Science \& Engineering, Nanyang Technological University, \authorcr Email: junzhao@ntu.edu.sg}

\author{\IEEEauthorblockN{Chang Liu}
\IEEEauthorblockA{
\textit{Graduate College}\\
\textit{Nanyang Technological University}\\
Singapore\\
liuc0063@e.ntu.edu.sg}
\and
\IEEEauthorblockN{Terence Jie Chua}
\IEEEauthorblockA{
\textit{Graduate College}\\
\textit{Nanyang Technological University}\\
Singapore\\
terencej001@e.ntu.edu.sg}
\and
\IEEEauthorblockN{Jun Zhao}
\IEEEauthorblockA{
\textit{School of Computer Science \& Engineering}\\
\textit{Nanyang Technological University}\\
Singapore\\
junzhao@ntu.edu.sg}
}

\markboth{}%
{}
% The only time the second header will appear is for the odd numbered pages
% after the title page when using the twoside option.
% 
% *** Note that you probably will NOT want to include the author's ***
% *** name in the headers of peer review papers.                   ***
% You can use \ifCLASSOPTIONpeerreview for conditional compilation here if
% you desire.

% If you want to put a publisher's ID mark on the page you can do it like
% this:
%\IEEEpubid{0000--0000/00\$00.00~\copyright~2015 IEEE}
% Remember, if you use this you must call \IEEEpubidadjcol in the second
% column for its text to clear the IEEEpubid mark.

% use for special paper notices
%\IEEEspecialpapernotice{(Invited Paper)}

% make the title area
\maketitle
\thispagestyle{fancy}
\pagestyle{fancy}
\lhead{This paper appears in the Proceedings of 2022 ACM/IEEE Symposium on Edge Computing (SEC).\\ Please feel free to contact us for questions or remarks.} 

\cfoot{~\\[-25pt]\thepage}

\begin{abstract}
Federated Learning is a modern decentralized machine learning technique where user equipments perform machine learning tasks locally and then upload the model parameters to a central server. 
In this paper, we consider a 3-layer hierarchical federated learning system which involves model parameter exchanges between the cloud and edge servers, and the edge servers and user equipment. In a hierarchical federated learning model, delay in communication and computation of model parameters has a great impact on achieving a predefined global model accuracy. 
Therefore, we formulate a joint learning and communication optimization problem to minimize total model parameter communication and computation delay, by optimizing local iteration counts and edge iteration counts. To solve the problem, an iterative algorithm is proposed. 
After that, a time-minimized UE-to-edge association algorithm is presented where the maximum latency of the system is reduced. Simulation results show that the global model converges faster under optimal edge server and local iteration counts. The hierarchical federated learning latency is minimized with the proposed UE-to-edge association strategy.
\end{abstract}
% Federated Learning is a new machine learning technique where the user equipment perform machine learning tasks locally and then upload the model parameters to the central server. In this paper, we consider a 3-layer hierarchical federated learning model, where better communication and computation *delay* trade-off can be achieved. The hierarchical federated learning system involves model parameters exchanges between the cloud and edge servers, edge servers and user equipment. The communication and computation delay has a great impact on the rate of achieving a required global model accuracy. To minimize the global learning time, we formulate a joint learning and communication optimization problem. The convexity of the problem is then studied. To solve the problem, an iterative algorithm is proposed and the optimal solutions of local iteration numbers and edge iteration numbers are obtained. Simulation results show the local iteration numbers and edge iteration numbers can be optimized so that the global model converges faster.

% Note that keywords are not normally used for peerreview papers.
\begin{IEEEkeywords}
Time Minimization, Federated Learning, Mobile Edge Computing.
\end{IEEEkeywords}

\section{Introduction}
High-tech mobile devices and Internet of Things (IoT) are generating a large amount of data~\cite{bonawitz2019towards}. These immense volumes of data have incentivized high-speed development in big data technology and Artificial Intelligence. Conventional Machine Learning (ML) and Deep Learning (DL) methods require devices to upload their data to a central server to develop a global model. However, the threats involving leakages of, and attacks on privacy-sensitive data demotivate users to upload data from their user equipments (UE) to a central server for computing. 
Fortunately, rapid development in computing technology has spurred the age of Mobile Edge Computing (MEC), in which the computing power of chips in mobile devices is strengthening, facilitating more computing-intensive tasks such as machine learning tasks. Computing processes that were traditionally computed centrally at a server are shifting to mobile edge devices. Decentralized ML, which takes into account privacy concerns, has been coined Federated Learning (FL)~\cite{haddadpour2019convergence,zhao2018federated} and this model training technique involves user equipments or user-edge devices to perform ML tasks locally, after which the locally trained model parameters will be uploaded to a central server for global model parameter aggregation. Globally aggregated models are then downloaded by the UE, and that concludes a single round of FL. The process mentioned above is repeated up until a stopping criterion is met. The FL process facilitates devices to build a shared model while preserving the data privacy of the users.

With the rapid development of Federated Learning, federated learning also faces challenges over wireless networks. 
It is common for FL parameter transmissions to be undertaken by numerous participating user equipments over resource-limited networks, for example, wireless networks where the bandwidth or power is limited.
The repeated FL model parameter transmission between user equipments and servers, therefore, can cause a significant delay that can be as much or more than the machine learning model training time, which impairs the performance of latency-sensitive applications. Some of these works acknowledged the physical properties of wireless communication and proposed solutions such as analog model aggregation over the air (over-the-air computation). These analog aggregation techniques aim to reduce communication latency by allowing devices to upload models simultaneously over a multi-access channel~\cite{amiri2020machine,ahn2019wireless,chen2019artificial,tran2019federated}. Nevertheless, analog model aggregation over the air would require stringent synchronization conditions to be met.
%Some other methods to reduce communication latency in FL systems include gradient quantization~\cite{amiri2020machine,wang2018atomo} and sparsification~\cite{amiri2020machine}, other low rank methods or structured updates~\cite{konevcny2016federated,aji2017sparse}.
Another genre of papers aims to minimize the overall FL convergence time (delay in both communication and local device computation) by allocating resources and solving optimization problems. Papers such as~\cite{chen2020convergence,shi2020device} established optimization problems which optimize variables such as UE uplink bandwidth and devices selection with the aim of minimizing FL convergence time.
Some other works proposed optimization problems with the aim of reducing FL training loss. In~\cite{chen2020joint}, the authors proposed a joint resource allocation and device scheduling optimization problem with the aim to minimize training loss. 
Similarly, the authors of~\cite{wang2019adaptive} proposed an optimization problem with the aim of minimizing training loss by a varying number of local update steps between global iterations, and the number of global aggregation rounds. 
% There are also other works of similar genre which proposed optimization problems with the aim of minimizing energy consumption of UE. 
% In~\cite{yang2020energy}, the authors proposed an optimization problem which optimizes the local accuracy of edge devices, CPU-cycle frequency of UE, transmit power of UE to base station (BS), and uplink bandwidth of each UE to BS, with the aim to minimize energy consumption of UE. Similarly, the works in~\cite{zeng2020energy} aimed to minimize energy consumption of UE by optimizing the uplink transmission time of each user, uplink bandwidth frequency of UE, and devices scheduling. \par
However, these works focused on energy consumption and machine learning model performance but failed to consider delay minimization. \cite{yang2020delay} proposed a joint transmission and computation optimization problem aiming to minimize the total delay in FL, but it only considered the traditional 2-layer FL framework. In~\cite{luo2020hfel}, the edge server aggregation and edge model transmission delay are taken into consideration, but they are not incorporated into the objective function. 

\textbf{The challenges of minimizing the total delay in hierarchical federated learning framework.} %Given the above observation, in this paper, we attempt to minimize total transmission and computation delay in a hierarchical FL framework. 
Different from traditional 2-layer FL, in hierarchical FL the total time is determined not only by end devices, but also by edge servers. 
To achieve a specific global accuracy, more local iterations provide a more accurate local model while taking more local computation time, but the cost can be mitigated by reducing the number of edge aggregations. More edge aggregations may reduce the demand for local computing, but it incurs more communication delay. 
To tackle these problems, in this paper, we formulate a joint communication and learning optimization problem in order to find the optimal solutions for local training iterations and edge aggregation iterations. 
We give an analysis of the property of the proposed optimization problem. 

\textbf{Our work is novel and contributes to the existing literature by:} 

\begin{itemize}
\item Proposing a 3-layer hierarchical FL time minimization model, and setting up an optimization problem which aims to minimize the hierarchical FL time by optimizing the number of local UE computations, and the number of local aggregations under given global accuracy. 

\item Considering the edge server model aggregation and edge to cloud server transmission delay into our optimization objective function. To the best of our knowledge, there are no existing papers that utilize the edge server model aggregation and edge-to-cloud server transmission delay in the optimization objective function. 

\item Presenting a UE-to-edge association strategy that aims to minimize the system's latency. The results show that the proposed UE-to-edge association strategy achieves the minimum latency compared with other methods.

\end{itemize}

The rest of the paper is structured as follows. Section~\ref{section:Related} surveys related work. In Section~\ref{section: system model}, we describe the system model and give a framework for our hierarchical federated learning system. 
The problem is then formulated in section~\ref{section:hierarchical}, followed by the analysis and the optimal solutions of the optimization problem. In Section~\ref{section:numerical}, the numerical results are shown and analyzed. Finally, we give a conclusion in Section~\ref{section:conclusion}.

\section{Related Work}\label{section:Related}
%In addition to the above mentioned papers, there are some works which aimed to minimize a weighted combination of FL convergence time and UE energy consumption. In~\cite{tran2019federated}, the authors proposed an optimization problem which aimed to minimize delay and UE energy consumption by optimizing the FL learning rate, UE local accuracy, UE CPU-cycle frequency, and device uplink transmission time allocation from UE to BS. Similarly,~\cite{luo2020hfel} sets up an optimization problem which aims to minimize a weighted combination of FL convergence time (delay) and UE energy consumption by optimizing the UE uplink transmission bandwidth, CPU-cycle frequency, and user edge association. In addition to the joint computation and communication resource allocation and edge association problem
There have been a lot of efforts to improve and analyze the performance of federated learning.

\textbf{Three-layer hierarchical federated learning.}
Many studies considered 3-layer federated learning. In~\cite{luo2020hfel}, the authors introduced a hierarchical FL edge learning framework in contrast to the traditional 2-layer FL systems proposed by the other above-mentioned papers. 
We should note that there are other papers which propose hierarchical architectures for their FL training that have other aims apart from those mentioned above. In~\cite{wang2020local}, the authors studied hierarchical federated learning with stochastic gradient descent and conducted a thorough analysis to analyze its convergence behavior. 
The works in~\cite{liu2020client} considered a client-edge-cloud hierarchical federated learning system and proposed a novel HierFAVG algorithm which allows edge servers to perform partial model aggregation to enable better communication-computation trade-off while allowing the model to be trained quicker.
The authors in~\cite{lim2021dynamic} utilized a Stackelberg differential game to model the optimal bandwidth allocation and reward allocation strategies in hierarchical federated learning.
\cite{mhaisen2021optimal} utilized branch and bound-based and heuristic-based solutions to minimize the data distribution distance at the edge level.
For IoT heterogeneous systems,~\cite{abdellatif2022communication}proposed an optimized user assignment and resource allocation solution over hierarchical FL architecture.
It can be seen that hierarchical FL is a promising solution that allows for the adaptive and scalable implementation by making use of the resources available at the edge of the network.

\textbf{Delay minimization in federated learning.}
The convergence time of FL was studied in many works.~\cite{chen2020convergence} jointly considered user selection and resource allocation in cellular networks to reduce the FL convergence time.
FedTOE~\cite{wang2021quantized} executed a joint allocation of bandwidth and quantization bits to minimize the quantization errors under transmission delay constraint.
\cite{chen2021communication} proposed to reduce the FL convergence time by reducing the volume of the model parameters exchanged among devices.
\cite{yang2020delay} proposed a joint transmission and computation optimization problem aiming to minimize the total delay in FL. 
However, these papers mainly focus on the resource allocation under the constraint of delay or convergence time but they failed to consider how user equipments themselves can reduce computation and communication delay from the frequency of communication between them and the edge servers while maintaining the required machine learning accuracy. Besides, they only studied the traditional 2-layer federated learning framework. In contrast to other papers, ~\cite{luo2020hfel} utilized a 3-layer, hierarchical model (UE-edge-cloud) for the optimization problem which aimed to minimize the weighted combination of FL convergence time and UE energy consumption. 
While the authors of~\cite{luo2020hfel} did consider the edge server aggregation and edge-to-cloud transmission delay in their paper, they did not propose or incorporate these factors into their proposed optimization objective function. 

\textbf{Novelty of our work.}
In this paper, we propose to minimize transmission and computation delay between the cloud and edge servers, and edge servers and UEs in a hierarchical FL framework, by optimizing the number of local UE computations and the number of local aggregations.
Although the above-mentioned works considered the FL convergence time minimization, they did not take the number of local computations and the number of edge aggregations into account. In our work, under given accuracy, the proposed method is able to find the optimal number of local computations, the number of edge aggregations and the UE-to-edge association strategy, thus providing an optimal global setting for 3-layer hierarchical FL system.

\section{System Model} \label{section: system model}
We consider a hierarchical federated learning model consisting of a cloud server $\mathcal{S}$, a set $\mathcal{M}$ of $M$ edge servers and a set $\mathcal{N}$ of $N$ user equipments (UEs) as shown in Fig.~\ref{fig1}. Each UE $n$ owns a local data set $\mathcal{D}_n$ with size $D_n$.

\subsection{Three-Layer Federated Learning Process}
The hierarchical federated learning process between UEs, edge and cloud is shown as follows. The procedure contains five steps: local computation at UE, local model transmission, edge aggregation, edge model transmission and cloud aggregation. 

\vspace{+5pt}
\subsubsection{Local computation}
Let $f_n$ be the CPU frequency for computation of UE $n$, and $C_n$ be the number of CPU cycles required for UE $n$ to compute one sample data. $D_n$ is the size of local data set,
% so the energy consumption for UE $n$ to run one local iteration is
% \begin{align}
%     E_n^{UE} = \kappa C_n D_n f_n^2,
% \end{align}
% where $\kappa$ is the effective capacitance coefficient of UE $n$, which is decided by UE $n$'s chipset architecture. 
then the time required in each iteration for computation of UE $n$ is 
\begin{align}
   t_n^{cmp} =  \frac{C_n D_n}{f_n}.
\end{align}
Let $a$ be the number of local iterations for each UE to perform in a single round of communication with the corresponding edge server.
In order to achieve a local accuracy $\theta \in (0,1)$, the number of local iterations $a$ that each UE needs to run is
\begin{align}\label{localUE}
  a=\zeta \ln \frac{1}{\theta},
\end{align}
where $\zeta$ is a constant depending on the loss function~\cite{yang2020energy}.

% Thus, the time cost of $I_n$ iterations of UE $n$ is formulated as
% \begin{align}
%   t_n^{cmp} = I_n \frac{C_n D_n}{f_k}.
% \end{align}
\vspace{+5pt}
\subsubsection{UE-to-edge model transmission}
After $a$ local iterations, UE uploads their local federated learning model to an edge server. 
We introduce the indicator variable $\chi_{n,m}$ which represents the association between UE $n$ and edge server $m$.
$\chi_{n,m} = 1$ means that UE $n$ uploads its local federated learning model to edge server $m$. Otherwise, $\chi_{n,m} = 0$.
Each UE can be associated with only one edge server. The user-server association rule can be described as:
\begin{align}
    \left\{
        \begin{aligned}
            &\chi_{n,m} \in \{0,1\},\forall n\in \mathcal{N},\forall m\in \mathcal{M}, \\
            &\sum_m \chi_{n,m}= 1,\forall n\in \mathcal{N}.
        \end{aligned}
    \right. 
\end{align}
Let the set of UEs that choose to transmit their local federated learning model to edge server $m$ be $\mathcal{N}_m$. 
Without loss of generality, \textit{orthogonal frequency division multiple access} (OFDMA) communication technique is adopted in this paper. According to Shannon's formula, the achievable transmission rate of UE $n$ and edge server $m$ can be formulated as
\begin{align}
    r_{n,m} = B_n \log_2 (1+\frac{g_{n,m}p_n}{N_0}), \label{r_n,m}
\end{align}
where $B_n$ is the bandwidth allocated to UE $n$, $g_{n,m}$ is the channel gain between UE $n$ and edge server $m$, and $p_n$ is the transmission power of UE $n$, and $N_0$ is the noise power. In this paper, we assume the bandwidth is equally allocated to all the UEs associated with the edge server.
Note that the total bandwidth each edge server $m$ can allocate is $\mathcal{B}$, so we have $\sum_n \chi_{n,m} B_n \leq \mathcal{B}, \forall m \in \mathcal{M}$.
Let $d_n$ denote the size of local model parameters. In wireless communication, downlink rate is typically much higher than uplink rate. Thus, the time for model downloads is negligible compared with computing time and model upload time. Then, the time cost for transmission federated learning model from UE $n$ to edge server within one round can be given as 
\begin{align}
   t_{n \to m}^{com}=\sum_m \chi_{n,m} \frac{d_n}{r_{n,m}}. \label{t_n_to_m}
\end{align}

\begin{figure}[tb]
\centering\includegraphics[width=0.55\textwidth]{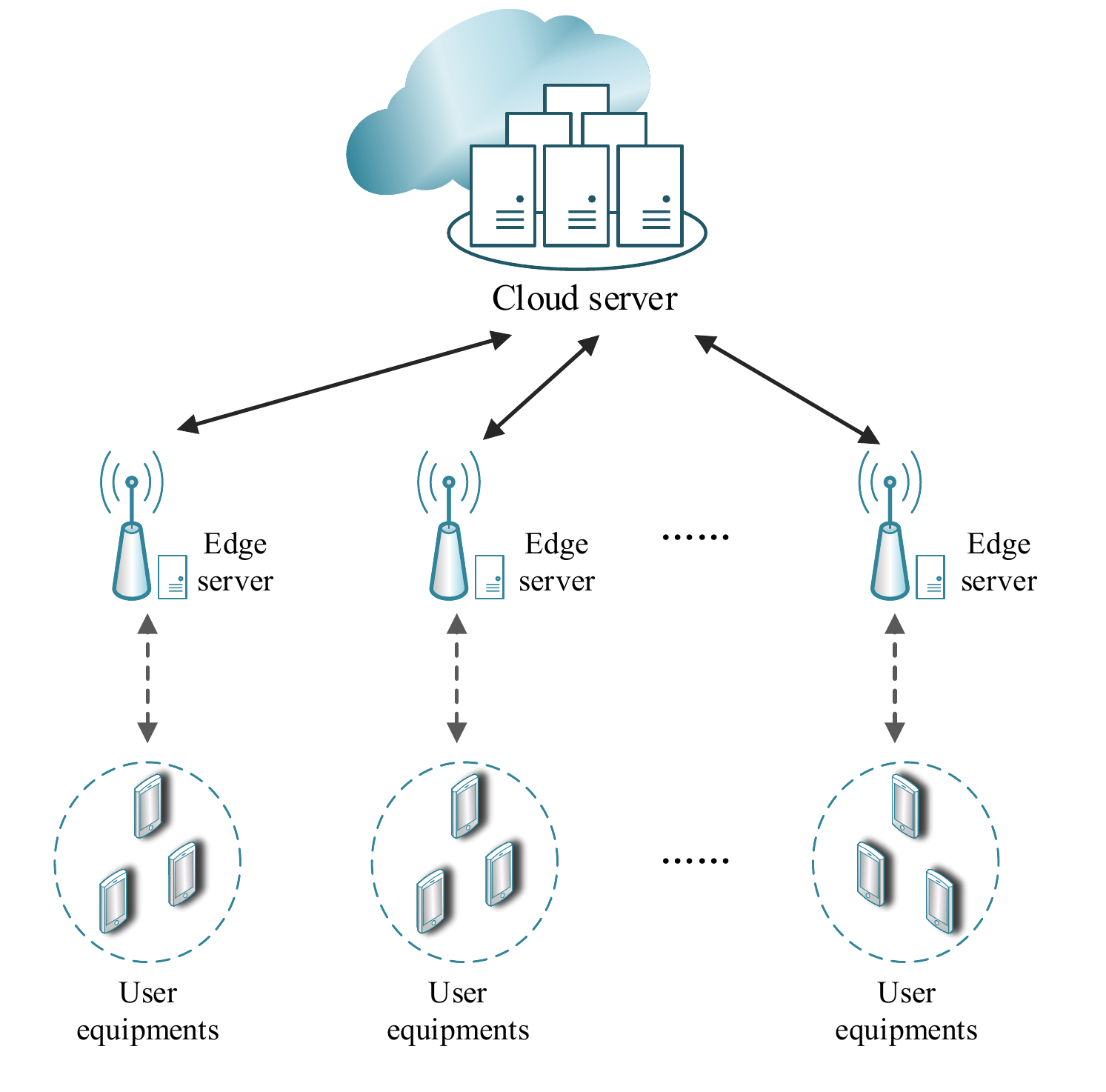}  
\caption{The architecture of the hierarchical federated learning model.}
\label{fig1}
\end{figure}

\subsubsection{Edge aggregation}
When edge server $m$ receives the model parameters transmitted from its associated UE $\mathcal{N}_m$, it obtains the averaged parameters $\boldsymbol{\omega}_m$ by 
\begin{align}
    \boldsymbol{\omega}_m = \frac{\sum_{n\in \mathcal{N}_m}D_n\omega_n}{D_{\mathcal{N}_m}},
\end{align}
where $D_{\mathcal{N}_m}:=\sum_{n \in \mathcal{N}_m} D_n$ is the total size of data aggregated at edge server $m$. Let $b$ be the number of iterations for each edge server to perform in a single round of communication with the cloud.
For simplicity, we use ``edge iterations" to represent the number of edge aggregations.
To achieve an edge accuracy $\mu$, for convex machine learning tasks, the number of edge iterations is given by~\cite{yang2020energy}
\begin{align}\label{edge}
    b=\frac{\gamma \ln (1/ \mu)}{1-\theta}.
\end{align}
% where edge accuracy $\mu$ means that after $t$ iterations,
% \begin{align}
%     F_m(\boldsymbol{\omega}_m(t))-F_m(\boldsymbol{\omega}_m^*) \leq \mu \left[ F_m(\boldsymbol{\omega}_m(0))-F_m(\boldsymbol{\omega}_m^*)\right].
% \end{align}
% Here, $\boldsymbol{\omega}^*$ is the optimal model at edge server. $F_m(\boldsymbol{\omega})$ is the loss function in edge server $m$.
From (\ref{edge}), it can be observed that $b$ is affected by both edge accuracy $\mu$ and local accuracy $\theta$. $\gamma$ is a constant related to the loss function and required loss function. It can be given by $\gamma=\frac{2L^2}{\beta^2 \delta}$~\cite{yang2020energy} where $\delta$ is a constant which is related to local training. When the model is required to be more accurate ($\mu$ and $\theta$ are small), the edge needs to run more iterations.
% For a fixed edge accuracy $\mu$, we can normalize $\mathcal{O}  (\log (1/ \mu))$ to $1$, and the upper bound of edge iterations can be given by $b=1/(1-\theta)$~\cite{tran2019federated}.
The delay  at edge server $m$ in each iteration is shown to be $\max_{n \in \mathcal{N}_m}\{a t_n^{cmp}+t_{n \to m}^{com}\}.$

\vspace{+15pt}
\subsubsection{Edge-to-cloud model transmission}
After $b$ rounds of edge aggregation, each edge server $m \in \mathcal{M}$ uploads its model parameters $\boldsymbol{\omega}_m$ to the cloud and downloads the global model from the cloud again. The delay within one round can be formulated as
\begin{align}
  t_{m \to c}^{com}=\frac{d_m}{r_m},
\end{align}
where $d_m$ is the size of model parameters at the edge server, and $r_m$ is the transmission rate between edge server $m$ and the cloud.

In order to achieve a global accuracy $\epsilon$, let the number of communications between edge server and cloud be $R(a,b,\epsilon)$. The global accuracy $\epsilon$ means that after $t$ iterations,
\begin{align}
    F(\boldsymbol{\omega}(t))-F(\boldsymbol{\omega}^*) \leq \epsilon\left[ F(\boldsymbol{\omega}(0))-F(\boldsymbol{\omega}^*)\right],
\end{align}
where $\boldsymbol{\omega}^*$ is the actual optimal global model.

\vspace{+10pt}
\subsubsection{Cloud aggregation}
Finally, the cloud aggregates the model parameters transmitted from the edge servers as follows:
\begin{align}
    \boldsymbol{\omega}=\frac{\sum_{m \in \mathcal{M}}D_{\mathcal{N}_m}\boldsymbol{\omega}_m}{D},
\end{align}
where $D:=\sum_{m \in \mathcal{M}}D_{\mathcal{N}_m}$ is the size of total data.

\vspace{+10pt}
\subsection{FL Model}
\vspace{+5pt}

In this work, we consider supervised federated learning. $\mathcal{D}_n=\{(\boldsymbol{x}_i,y_i)\}_{i=1}^{D_n}$ is the training set where $\boldsymbol{x}_i \in \mathbb{R}^d$ is the \mbox{$i$-th} input sample and $y_i \in \mathbb{R}$ is the corresponding label. Vector $\boldsymbol{\omega}$ is the parameters related to the FL model. We introduce the loss function $f(\boldsymbol{\omega},\boldsymbol{x}_i,y_i)$ that represents the loss function for one sample data. For different learning tasks, the loss function may be different. Loss function $F_n(\boldsymbol{\omega})$ on each UE $n$ is given by
\begin{align}
    F_n(\boldsymbol{\omega})=\frac{\sum_{i \in \mathcal{D}_n}f_i(\boldsymbol{\omega},\boldsymbol{x}_i,y_i)}{D_n}.
\end{align}

\newcommand{\norm}[1]{\left\lVert#1\right\rVert}
\begin{assumption} \label{Assumption}
	$F_n(\boldsymbol{\omega})$ is $\beta$-strongly convex and $L$-smooth. $\forall n$ and $\forall \boldsymbol{\omega}, \boldsymbol{\omega}' \in \mathbb{R}^d$:
	{\setlength{\abovedisplayskip}{-3pt}
    \setlength{\belowdisplayskip}{-1pt}
	\begin{align}
	    F_n(\boldsymbol{\omega}) &\geq F_n(\boldsymbol{\omega}') + \bigl\langle \nabla F_n (\boldsymbol{\omega}'), \boldsymbol{\omega} - \boldsymbol{\omega}' \bigr\rangle  + \frac{\beta }{2} \norm{\boldsymbol{\omega} - \boldsymbol{\omega}'}^2,  \nonumber \\
	    &\norm{\nabla F_n(\boldsymbol{\omega}') - \nabla F_n(\boldsymbol{\omega})} \leq L\norm{\boldsymbol{\omega}' - \boldsymbol{\omega}}. \nonumber 
	\end{align}
	}
\end{assumption}

The training process is to minimize the global loss function $F(\boldsymbol{\omega})$, which can be formulated as
\begin{align}
    F(\boldsymbol{\omega})=\frac{\sum_{i=1}^N D_iF_i(\boldsymbol{\omega})}{D}.
\end{align}
We utilize distributed approximate Newton algorithm (DANE) ~\cite{shamir2014communication} to train the FL model. DANE is one of the most popular communication-efficient distributed training algorithm which is designed to solve general optimization problems. At each iteration, DANE takes an inexact Newton step appropriate for the geometry of objective problem. 
Stochastic gradient descent (SGD) is widely utilized since the computational complexity of SGD is low. However, it can only be used where the requirement on accuracy is not strict. Besides, SGD requires more iterations than gradient descent (GD). 
In federated learning, the wireless communication resource is valuable, so we use GD in UE local training.
At each iteration, each UE n updates its local model parameters. 
%At $t$-th iteration, each UE $n$ updates local model parameters as
% \begin{align}
%     \boldsymbol{h}_n(t+1) = \boldsymbol{h}_n(t)-\eta \nabla G_n(\boldsymbol{\omega}_m(t),\boldsymbol{h}_n(t)),
% \end{align}
% where $\eta$ is the predefined gradient descent step size. The $ G_n(\boldsymbol{\omega}_m(t),\boldsymbol{h}_n(t))$ is defined as:
% \begin{align} \label{G_n}
%   &  G_n(\boldsymbol{\omega}_m(t),\boldsymbol{h}_n(t))  \nonumber\\
% & =F_n(\boldsymbol{\omega}_m(t)+\boldsymbol{h}_n)   - \left[\nabla F_n(\boldsymbol{\omega}_m(t))-\delta \nabla F(\boldsymbol{\omega}_m(t))\right]^T \boldsymbol{h}_n,
% \end{align}
% where $\delta$ is a constant. $\boldsymbol{h}_n$ describes the difference between local model parameters and edge server model parameters. Thus, the local model of UE $n$ at iteration $t$ is $\boldsymbol{\omega}_n(t) = \boldsymbol{\omega}(t) + \boldsymbol{h}_n(t)$.
According to (\ref{localUE}), the UE needs to run $a$ iterations to achieve a local accuracy $\theta$. 
% The local accuracy $\theta$ means that after $t$ iterations,
% \begin{align} \label{localaccuracy}
%      &G_n(\boldsymbol{\omega}_m(t),\boldsymbol{h}_n(t)) - G_n(\boldsymbol{\omega}_m(t),\boldsymbol{h}_n^*) \nonumber\\ &\leq \theta \left[G_n(\boldsymbol{\omega}_m(t),\boldsymbol{h}_n(0)) - G_n(\boldsymbol{\omega}_m(t),\boldsymbol{h}_n^*)\right],
% \end{align}
% where $\boldsymbol{h}_n^*$ is the optimal value. 
%We introduce the FL algorithm in Algorithm (\ref{algorithm1}). 
After every $a$ local iterations, each UE uploads its local model to the corresponding edge server, and edge server aggregates these models. Then after every $b$ edge server aggregations, the edge server uploads its model to the cloud, and the cloud performs global aggregation. Table \ref{table_notations} summarizes the notations.

\begin{table*}[t]
\renewcommand\arraystretch{1.5}
\centering
\small
\caption{Meaning of Notations} \label{table_notations}
\begin{tabular}{|m{1.6cm}<{\centering}||m{6cm}<{\centering}|m{1.6cm}<{\centering}||m{6cm}<{\centering}|}\hline  
\textbf{Notation} & \textbf{Description} & \textbf{Notation} & \textbf{Description} \\\hline  
    $\mathcal{S}$   &cloud server&$\mathcal{M}$&set of edge servers  \\\hline  
    $\mathcal{N}$  & set of user equipments & $\mathcal{N}_m$ & set of UEs that communicate with edge server $m$ \\ \hline
    $n$ & a single user equipment & $m$ & a single edge server \\ \hline
    $\mathcal{D}_n$&UE $n$'s local data set &$C_n$ &number of CPU cycles for UE to compute one input data \\\hline
    $D_n$&size of local data set at UE $n$&$D_{\mathcal{N}_m}$&size of data under edge server $m$ with set of UE $\mathcal{N}_m$ \\\hline
       $f_n$&UE $n$'s CPU frequency variable & $d_n$  &size of local model parameters at UE $n$  \\\hline  
       $t_n^{cmp}$& time for computation each iteration at UE $n$& $t_{n \to m}^{com}$&time for transmission from UE $n$ to edge server $m$\\\hline 
       $r_{n,m}$&transmission rate of UE $n$ and edge server $m$ &$p_n$&transmission power of UE $n$\\\hline
       $B_{n}$&bandwidth that allocated to UE $n$ & $g_{n,m}$& channel gain between UE $n$ and edge server $m$ \\ \hline
       $\chi_{n,m}$ & indicator variable of UE-to-edge association &$\mathcal{B}$ &total bandwidth each edge server can allocate \\\hline
       $a$&number of local iterations for UE in a round&$b$&number of iterations for edge server in a round \\ \hline
       $R(a,b,\epsilon)$&number of communications between edge server and cloud&$t_{m \to c}^{com}$&time for transmission from edge server $m$ to cloud \\\hline
       $\theta$&local accuracy & $\mu$&edge accuracy \\\hline
       $\epsilon$&global accuracy&$N_0$&noise power \\\hline
    %   $\boldsymbol{h}_n$&difference between local model and edge server model &$F(\boldsymbol{\omega})$& loss function\\\hline
\end{tabular}
\end{table*}

\begin{algorithm}[tb]
\caption{Hierarchical FL Algorithm} 
    \begin{algorithmic}[1]
        \STATE {Initialize all user equipments (UEs) with parameter $w_n(0)$} \vspace{-10pt}
        \STATE{$i \gets 1$}
        \WHILE{global accuracy $\epsilon$ is not obtained}
            \STATE{each UE computes $\nabla F_n(\boldsymbol{\omega}_n(i))$ and sends it to the edge server)}.
            \STATE{each edge server computes $\nabla F(\boldsymbol{\omega}_m(i))=\frac{1}{N}\sum_{n=1}^N\nabla F_n(\boldsymbol{\omega}_n(i))$ and broadcasts it to all the UEs}.
            \FOR{each UE $n=1,2,\dotsc,N$ in parallel}
            \STATE{update $\boldsymbol{\omega}_n(t)$}.
            % \STATE{$\boldsymbol{h}_n(i+1) = \boldsymbol{h}_n(i)-\eta \nabla G_n(\boldsymbol{\omega}(i),\boldsymbol{h}_n(i))$}.
            % \STATE{solve $\omega_n(i) \gets {\argmin}_{\omega \in \mathbb{R}^d}\{F_n(\boldsymbol{\omega}-(\nabla F_n(\boldsymbol{\omega}_n(i-1))-\eta\nabla F(\boldsymbol{\omega}_n(i-1))^T\boldsymbol{\omega}+\frac{\nu}{2}\left\|\boldsymbol{\omega}-\boldsymbol{\omega}()\right\|_2\}$}
            %\STATE{$\omega_n(i) \gets \omega_n(i-1)-\eta \nabla F_n(\boldsymbol{\omega}_n(i-1))$}
            \ENDFOR
            \IF{$i \mid a =0$}
            \FOR{each UE $l=1,2,\dotsc,N$ in parallel}
            \STATE{communicate with its corresponding edge server and edge server perform edge aggregation.}
            \ENDFOR
            \ENDIF
            \IF{$i \mid a b = 0$}
            \FOR{each edge server $k=1,2,\dotsc,M$ in parallel}
            \STATE{communicate with cloud and cloud perform aggregation.}
            \ENDFOR
            \ENDIF
            \STATE {$i \gets i+1$}
        \ENDWHILE
    \end{algorithmic}
\end{algorithm}

\section{Hierarchical federated learning delay optimization}

\label{section:hierarchical}
In this section, we give a formulation of the proposed problem followed by analysis and solutions. 

\subsection{Problem Formulation}
% Since the edge servers are typically at a fixed location and have stable power supply, so the energy cost to aggregation local models at edge servers is not considered in this paper.
Given the system model above, we now formulate the hierarchical federated learning time optimization problem:
\begin{subequations}\label{p1}
\begin{align}
\min_{a,b,\boldsymbol{\chi},\boldsymbol{f},\boldsymbol{p}}~&R(a,b,\epsilon)\max_{m \in \mathcal{M}}\{b\max_{n \in \mathcal{N}_m}\{a t_n^{cmp} + t_{n \to m}^{com}\} + t_{m \to c}^{com}\},\tag{\ref{p1}}\\
\text{s.t. }
&0<f_n \leq f_n^{max},~\forall n \in \mathcal{N}_m,\label{p1_1}\\
&0<p_n \leq p_n^{max},~\forall n \in \mathcal{N}_m,\label{p1_2}\\
&\chi_{n,m} \in \{0,1\},\forall n\in \mathcal{N},\forall m\in \mathcal{M}, \label{p1_3}\\
&\sum_m \chi_{n,m} = 1,\forall n\in \mathcal{N},\label{p1_4}\\
& \sum_n \chi_{n,m} B_n \leq \mathcal{B}, \forall m \in \mathcal{M},\label{p1_5}\\
& a, b \in \mathbb{N}^+.\label{p1_6}
\end{align}
\end{subequations}
where $a t_n^{cmp} + t_{n \to m}^{com}$ is the time taken for UE $n$ to perform $a$ iterations of local computation and one round of communication with edge server $m$. Thus, $\max_{n \in \mathcal{N}_m}\{a t_n^{cmp} + t_{n \to m}^{com}\}$ is the time taken for one single round of communication between all UEs and its corresponding edge servers, and $\max_{m \in \mathcal{M}}\{b\max_{n \in \mathcal{N}_m}\{a t_n^{cmp} + t_{n \to m}^{com}\} + t_{m \to c}^{com}\}$ is the time taken for a single round of communication between all edge servers and the cloud. In the objective function, the total delay of the entire federated learning task is minimized.
Constraints (\ref{p1_1}) and (\ref{p1_2}) are the maximum CPU frequency and transmission power of UEs; constraints (\ref{p1_3}) and (\ref{p1_4}) are the UE-edge server association rules;  constraint (\ref{p1_5}) guarantees that the bandwidth of each edge server does not exceed the upper bound limit; constraint (\ref{p1_6}) specifies that $a$ and $b$ are integers.
This optimization problem falls into the category of integer programming since $a, b$ are positive integer values. While integer programming is considered to be an NP-hard problem~\cite{schrijver1998theory}, we could obtain sub-optimal solutions by relaxing the integer constraints and allowing $a, b$ to be continuous variables, which are rounded back to integer numbers later. According to (\ref{localUE}), $\theta$ can be expressed as $\theta=1/e^{\frac{a}{\zeta}}$. According to (\ref{edge}), $\mu$ can be expressed as $\mu=1/e^{\frac{b}{\gamma}(1-\theta)}$. The number of communications between edge server and cloud $R(a,b,\epsilon)$ is given by:
\begin{align}
    R(a,b,\epsilon)=\frac{C\ln(\frac{1}{\epsilon})}{1-\mu}.
\end{align}
Substitute $\mu$ with $\mu=1/e^{\frac{b}{\gamma}(1-\theta)}$ and $\theta$ with $\theta=1/e^{\frac{a}{\zeta}}$, we get:
\begin{align}
    R(a,b,\epsilon)=\frac{C\ln(\frac{1}{\epsilon})}{1-e^{-\frac{b}{\gamma}(1-e^{-\frac{a}{\zeta}})}}.
\end{align}
\subsection{Analysis}
In this section, we design an algorithm to solve the min-max problem (\ref{p1}). By introducing new slack variables $T$ and $\tau$, problem (\ref{p1}) is equivalent to the following optimization problem:
\begin{subequations} \label{p2}
\begin{align}
    \min_{\boldsymbol{a},\boldsymbol{b},\boldsymbol{\chi},T,\boldsymbol{\tau},\boldsymbol{f},\boldsymbol{p}}~&R(a,b,\epsilon) \cdot T, \tag{\ref{p2}} \\
    \text{s.t.~}
    & b\tau_m  + t_{m \to c}^{com} \leq T,~\forall m \in \mathcal{M}, \label{p2_1}\\
    & a t_n^{cmp} + t_{n \to m}^{com} \leq \tau_m,~\forall n \in \mathcal{N}_m, \label{p2_2}\\
    & 0<f_n \leq f_n^{max},~\forall n \in \mathcal{N}_m,\label{p2_3}\\
    & 0<p_n \leq p_n^{max},~\forall n \in \mathcal{N}_m,\label{p2_4}\\
    &\chi_{n,m} \in \{0,1\},\forall n\in \mathcal{N},\forall m\in \mathcal{M}, \label{p2_5}\\
&\sum_m \chi_{n,m} = 1,\forall n\in \mathcal{N},\label{p2_6}\\
& \sum_n \chi_{n,m} B_n \leq \mathcal{B}, \forall m \in \mathcal{M}.\label{p2_7}
\end{align}
\end{subequations}
where $T$ defines the time interval between each round of communication between edge server $m$ and the cloud, and $\tau_m$ defines the time interval between each round of communication between UE $n$ and edge server $m$. 
% $f_n^{max}$ and $p_n^{max}$ are the maximum local computation capacity and maximum transmit power of UE $n$. 
We should note that constraints (\ref{p2_1}) and (\ref{p2_2}) confine the delay (aggregation and communication to the cloud) at each edge server and the delay (computation and communication to the edge server) at each UE, respectively. 
To solve the optimization problem, we decompose the problem into two sub-problems. Sub-problem \uppercase\expandafter{\romannumeral1} solves the local iteration number $a$, edge iteration number $b$, UE CPU frequency $f$ and transmission power $p$. Sub-problem \uppercase\expandafter{\romannumeral2} obtains the optimal UE-to-edge association.

\subsection{Solution of sub-problem \uppercase\expandafter{\romannumeral1}}

We will show that Problem (\ref{p2}) is a convex optimization problem under given UE-to-edge association $\boldsymbol{\chi}$. To this end, we present lemmas below.

\begin{lem}\label{lem1}
The reciprocal of a positive and concave function is convex.
\end{lem}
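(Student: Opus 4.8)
\textbf{Proof proposal for Lemma~\ref{lem1}.}
The plan is to let $g:\mathcal{C}\to\mathbb{R}$ be a concave function with $g(\boldsymbol{x})>0$ for all $\boldsymbol{x}$ in a convex set $\mathcal{C}$, and to show directly from the definition of convexity that $h(\boldsymbol{x}):=1/g(\boldsymbol{x})$ is convex. First I would fix $\boldsymbol{x},\boldsymbol{y}\in\mathcal{C}$ and $\lambda\in[0,1]$, write $z=\lambda\boldsymbol{x}+(1-\lambda)\boldsymbol{y}$, and use concavity of $g$ to get $g(z)\geq \lambda g(\boldsymbol{x})+(1-\lambda)g(\boldsymbol{y})>0$. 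Since $t\mapsto 1/t$ is decreasing on $(0,\infty)$, this yields $h(z)=1/g(z)\leq 1/\bigl(\lambda g(\boldsymbol{x})+(1-\lambda)g(\boldsymbol{y})\bigr)$, so it remains to bound this last quantity by $\lambda h(\boldsymbol{x})+(1-\lambda)h(\boldsymbol{y})$.

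The remaining inequality is exactly the statement that $t\mapsto 1/t$ is convex on $(0,\infty)$, applied to the points $g(\boldsymbol{x})$ and $g(\boldsymbol{y})$ with weights $\lambda,1-\lambda$. I would establish convexity of $1/t$ either by the second-derivative test (its second derivative $2/t^3$ is positive on $(0,\infty)$) or by the elementary algebraic identity
\begin{align}
\lambda\cdot\frac{1}{u}+(1-\lambda)\cdot\frac{1}{v}-\frac{1}{\lambda u+(1-\lambda)v}
=\frac{\lambda(1-\lambda)(u-v)^2}{uv\,(\lambda u+(1-\lambda)v)}\geq 0, \nonumber
\end{align}
valid for $u,v>0$. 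Chaining the two inequalities gives $h(z)\leq \lambda h(\boldsymbol{x})+(1-\lambda)h(\boldsymbol{y})$, which is the desired convexity of $h$.

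There is no serious obstacle here; the only point requiring a little care is the composition-monotonicity step, namely that one must combine \emph{two} facts — that $1/t$ is order-reversing and that it is convex — rather than just one, and that positivity of $g$ is genuinely needed (it guarantees $h$ is well defined and finite, and that the denominators above are positive). An alternative, slicker route is to invoke the standard rule that if $\phi$ is convex and nonincreasing and $g$ is concave, then $\phi\circ g$ is convex, applied with $\phi(t)=1/t$ restricted to $(0,\infty)$; I would mention this as the conceptual reason but still give the short self-contained argument above for completeness.
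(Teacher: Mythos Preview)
Your argument is correct and is a genuinely different route from the paper's. The paper assumes $f$ is a twice-differentiable function of a single real variable and simply computes
\[
h''(x)=-\frac{f''(x)f(x)-2f'(x)^2}{f(x)^3},
\]
observing that $f>0$ and $f''<0$ force $h''>0$. Your proof instead works directly from the definition of convexity via the composition principle ``convex nonincreasing $\circ$ concave is convex,'' carried out by hand using the monotonicity and convexity of $t\mapsto 1/t$ on $(0,\infty)$. What your approach buys is generality: it needs no differentiability hypothesis and applies verbatim to functions on any convex domain in $\mathbb{R}^d$, which is in fact how Lemma~\ref{lem1} is used later (the objective $R(a,b,\epsilon)\cdot T$ is multivariate). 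The paper's computation is shorter but tacitly imports a smoothness assumption and is phrased for scalar $x$; your version is the cleaner and more robust justification.
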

\begin{proof}
Let $h(x)=\frac{1}{f(x)}$, where $f(x)$ is twice differentiable. Then the second order derivative of $h(x)$ can be given by:
\begin{align}
h''(x)=-\frac{f''(x)f(x)-2f'^2(x)}{f^3(x)}.
\end{align}
Since $f(x)$ is positive and concave, $f(x) > 0$ and $f''(x) < 0$. Thus $f''(x)f(x)-2f'^2(x)<0$, and $h''(x)>0$. As a result, $h(x)$ is a convex function.
\end{proof}
% \begin{proof}
% The proof will be presented in Appendix A of the online full version~\cite{fullversion} due to space limitation.
% \end{proof}

We have Lemma~\ref{lem2} below, which together with Lemma~\ref{lem1} shows that $R(a,b,\epsilon) \cdot T$ is convex.

\begin{lem}\label{lem2}
$\frac{1}{R(a,b,\epsilon) \cdot T}$ is a positive and concave function.
\end{lem}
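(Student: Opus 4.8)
The plan is to first substitute the closed form of $R(a,b,\epsilon)$ obtained above, which rewrites the target as
\begin{align}
\frac{1}{R(a,b,\epsilon)\cdot T}=\frac{1-e^{-\frac{b}{\gamma}\left(1-e^{-a/\zeta}\right)}}{C\ln(1/\epsilon)\cdot T},\nonumber
\end{align}
and then to treat positivity and concavity separately. Positivity is immediate: $\epsilon\in(0,1)$ gives $\ln(1/\epsilon)>0$, the loss-function constant $C>0$, the slack variable $T>0$, and for $a,b>0$ the exponent $-\frac{b}{\gamma}(1-e^{-a/\zeta})$ is strictly negative, so the numerator lies in $(0,1)$. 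Hence the quotient is strictly positive over the feasible region.

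For concavity, since $C\ln(1/\epsilon)$ is a positive constant it suffices to show that $g(a,b,T):=\frac{1}{T}\bigl(1-e^{-\frac{b}{\gamma}(1-e^{-a/\zeta})}\bigr)$ is concave. I would assemble the expression by composition: $\theta(a)=e^{-a/\zeta}$ is convex and decreasing, so $s(a):=1-\theta(a)$ is concave, increasing, and lies in $(0,1)$; set $v(a,b):=\tfrac{b}{\gamma}s(a)$ and $\phi(a,b):=1-e^{-v(a,b)}$, where $x\mapsto 1-e^{-x}$ is concave and increasing. Then I would compute the Hessian of $g$ explicitly in terms of $s,s',s''$ (noting $s''(a)=-s'(a)/\zeta<0$) and the powers of $1/T$, and verify negative semidefiniteness through the leading principal minors. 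Once $g$ is shown concave and positive, Lemma~\ref{lem1} yields convexity of $R(a,b,\epsilon)\cdot T$; since the remaining constraints of Problem~(\ref{p2}) are convex in the continuous variables once $\boldsymbol{\chi}$ is fixed, this establishes that the problem is convex.

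I expect the Hessian computation to be the crux. The composition rules cannot be applied factor by factor, since the inner map $v(a,b)=\tfrac{b}{\gamma}s(a)$ has positive mixed second derivative $s'(a)/\gamma$ and is not jointly concave; concavity of $\phi$ must instead emerge from a cancellation in which the negative diagonal terms produced by $s''<0$ dominate the cross term $(v_{ab}-v_a v_b)^2$, and here one checks the identity $v_{ab}-v_a v_b=\tfrac{s'(a)}{\gamma}(1-v)$, which keeps this cross term controlled. The genuinely delicate point is that the division by the slack $T$ mixes into every entry of the Hessian, so the minor conditions have to be re-verified jointly rather than for each factor in isolation; should the fully joint argument become unwieldy, the fallback is to prove the property blockwise — concavity of $\tfrac{1}{R(a,b,\epsilon)}$ in $(a,b)$ for fixed $T$, with $T$ absorbed through the affine constraints (\ref{p2_1})--(\ref{p2_2}) — which is what the subsequent decomposition scheme for solving~(\ref{p2}) actually uses.
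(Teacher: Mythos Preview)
Your primary plan cannot succeed: joint concavity of $g(a,b,T)=\frac{1}{T}\bigl(1-e^{-\frac{b}{\gamma}(1-e^{-a/\zeta})}\bigr)$ in all three variables is false, because for fixed $(a,b)$ with numerator in $(0,1)$ one has $\partial^2 g/\partial T^2 = 2\bigl(1-e^{-\frac{b}{\gamma}(1-e^{-a/\zeta})}\bigr)/T^3>0$, so $g$ is strictly convex in $T$. No cancellation from the $(a,b)$ block can repair a positive diagonal entry in the Hessian, so the negative-semidefiniteness test fails at the outset.

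Your fallback is precisely what the paper does, and is all that Lemma~\ref{lem3} actually uses: the paper treats $T$ as a fixed positive constant, so that $\frac{1}{CT\ln(1/\epsilon)}$ is a positive scalar, and proves concavity of $f(a,b)=1-e^{-\frac{b}{\gamma}(1-e^{-a/\zeta})}$ in $(a,b)$ alone. The mechanics match your outline: writing $g(x)=1-e^{-x}$ and $f(a,b)=g\bigl[\tfrac{b}{\gamma}g(a/\zeta)\bigr]$, exploiting $g''=-g'$, computing the $2\times2$ Hessian, and checking $f_{aa}<0$ and $f_{aa}f_{bb}-f_{ab}^2\ge0$. After substituting $t=g(a/\zeta)\in(0,1)$ and $k=b/\gamma>0$, the determinant condition reduces to $kt(2-t)\ge 1-t$; the paper then closes with the heuristic that ``$kt$ is a relatively large number,'' so you should be aware this inequality is not a pure algebraic identity but requires $k$ (equivalently $b$) not to be too small. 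In short: drop the joint-$T$ argument, go straight to the $(a,b)$ Hessian you already sketched, and note the residual size assumption on $kt$.
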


% \begin{proof}
% The proof will be presented in Appendix B of the online full version~\cite{fullversion} due to space limitation.
% \end{proof}
\begin{proof}
The reciprocal of the objective of problem (\ref{p2}) is $$\frac{1}{R(a,b,\epsilon) \cdot T} = \frac{1-e^{-\frac{b}{\gamma}(1-e^{-\frac{a}{\zeta}})}}{CT\ln(\frac{1}{\epsilon})}.$$ Let $$f(a,b)=1-e^{-\frac{b}{\gamma}(1-e^{-\frac{a}{\zeta}})},$$ where $a,b,\zeta,\gamma$ are all positive numbers. Thus, $0<e^{-\frac{a}{\zeta}}<1$. Then $0<1-e^{-\frac{a}{\zeta}}<1$, and $0<1-e^{-\frac{b}{\gamma}(1-e^{-\frac{a}{\zeta}})}<1$. Besides, $C,T,\ln(\frac{1}{\epsilon} )$ are positive numbers. Therefore, $\frac{1}{R(a,b,\epsilon) \cdot T}>0$. 

Then, we investigate the concavity of $f(a,b)$. Let $g(x)=1-e^{-x}$, then $f(a,b)$ can be expressed as $$f(a,b)=g\bigg[\frac{b}{\gamma}\cdot g\bigg(\frac{a}{\zeta}\bigg)\bigg].$$ The second-order partial derivatives $f_{aa},f_{bb},f_{ab}$ can be given by:
\begin{align}
    f_{aa}&=\frac{b}{\gamma\zeta}\Bigg\{\frac{b}{\gamma\zeta}g''\left[\frac{b}{\gamma} g\left(\frac{a}{\zeta}\right)\right]g'^2\left(\frac{a}{\zeta}\right)\\\nonumber&~~~~~~~~~~~~~~~~~~+\frac{1}{\zeta}g'\left[\frac{b}{\gamma} g\left(\frac{a}{\zeta}\right)\right]g''\left(\frac{a}{\zeta}\right)\Bigg\},\\
    f_{bb}&=\frac{1}{\gamma^2}g''\left[\frac{b}{\gamma} g\left(\frac{a}{\zeta}\right)\right]g^2\left(\frac{a}{\zeta}\right),\\
    f_{ab}&=\frac{b}{\gamma^2\zeta}g''\left[\frac{b}{\gamma} g\left(\frac{a}{\zeta}\right)\right]g'\left(\frac{a}{\zeta}\right)g\left(\frac{a}{\zeta}\right)\\\nonumber&~~~~~~~~~~~~~~+\frac{1}{\gamma\zeta}g'\left[\frac{b}{\gamma} g\left(\frac{a}{\zeta}\right)\right]g'\left(\frac{a}{\zeta}\right).
\end{align}
Note that $g\left(x\right)=1-e^{-x}$, $g'\left(x\right)=e^{-x}$ and $g''\left(x\right)=-e^{-x}$. Thus, $g''\left(x\right)=-g'\left(x\right)$. Then, $f_{aa},f_{bb},f_{ab}$ can be rewritten as:
\begin{align}
    f_{aa}&=\frac{b}{\gamma \zeta^2}g'\left(\frac{a}{\zeta}\right)g'\left[\frac{b}{\gamma} g\left(\frac{a}{\zeta}\right)\right]\left[-\frac{b}{\gamma}g'\left(\frac{a}{\zeta}\right)-1\right],\label{f_aa}\\
    f_{bb}&=-\left[\frac{1}{\gamma}g\left(\frac{a}{\zeta}\right)\right]^2g'\left[\frac{b}{\gamma} g\left(\frac{a}{\zeta}\right)\right],\label{f_bb}\\
    f_{ab}&=\frac{1}{\gamma \zeta}g'\left(\frac{a}{\zeta}\right)g'\left[\frac{b}{\gamma} g\left(\frac{a}{\zeta}\right)\right]\left[-\frac{b}{\gamma}g\left(\frac{a}{\zeta}\right)+1\right].\label{f_ab}
\end{align}The Hessian matrix of $f(a,b)$ is
\begin{align}
    \left[ \begin{array}{cc}
f_{aa} & f_{ab} \\
f_{ba} & f_{bb} 
\end{array} 
\right ].
\end{align}
Since $f(a,b)$ is twice differentiable, the Hessian matrix is symmetric. That is, $f_{ab}=f_{ba}$. Next, we show that $f_{aa}<0$ and $f_{aa} \cdot f_{bb}-f^2_{ab} \geq0$.\par
From $g(x)=1-e^{-x}$ and $g'(x)=e^{-x}$, we have for any $x$, $g(x)>0$ and $g'(x)>0$. $b>0, \gamma >0$, so $-\frac{b}{\gamma}g'(\frac{a}{\zeta})-1<0$. Therefore, $f_{aa}<0$.

From (\ref{f_aa}), (\ref{f_bb}) and (\ref{f_ab}), it can be obtained that
\begin{align}
   & f_{aa} \cdot f_{bb}-f^2_{ab}=\frac{1}{\gamma^2\zeta^2}g'^2\left[\frac{b}{\gamma} g\left(\frac{a}{\zeta}\right)\right]g'\left(\frac{a}{\zeta}\right)\cdot \nonumber\\&\left\{\frac{b}{\gamma}g^2\left(\frac{a}{\zeta}\right)\left[\frac{b}{\gamma}g'\left(\frac{a}{\zeta}\right)+1\right]-g'\left(\frac{a}{\zeta}\right)\left[1-\frac{b}{\gamma}g\left(\frac{a}{\zeta}\right)\right]^2\right\}.
\end{align}
It is clear that $$\frac{1}{\gamma^2\zeta^2}g'^2\left[\frac{b}{\gamma} g\left(\frac{a}{\zeta}\right)\right]g'\left(\frac{a}{\zeta}\right) >0.$$ Next, we investigate the sign of $$\frac{b}{\gamma}g^2\left(\frac{a}{\zeta}\right)\left[\frac{b}{\gamma}g'\left(\frac{a}{\zeta}\right)+1\right]-g'\left(\frac{a}{\zeta}\right)\left[1-\frac{b}{\gamma}g\left(\frac{a}{\zeta}\right)\right]^2.$$ Let $g\big(\frac{a}{\zeta}\big)=t$ and $\frac{b}{\gamma}=k$, and then it can be expressed as
\begin{align}
    &kt^2(k-kt+1)-(1-t)(1-kt)^2 \\
    =&-kt^2+2kt+t-1 \\
    =&kt(2-t)-(1-t),
\end{align}
where $k>0$ and $t \in (0,1)$.Since $kt$ is a relatively large number, $kt(2-t) \geq (1-t)$. Therefore, $kt^2(k-kt+1)-(1-t)(1-kt)^2 \geq 0$. Thus,  $f_{aa} \cdot f_{bb}-f^2_{ab} \geq0$. Then Lemma \ref{lem2} holds.
\end{proof}
\begin{lem}\label{lem3}
Problem (\ref{p2}) under given UE-to-edge association $\boldsymbol{\chi}$ is a convex optimization problem.
\end{lem}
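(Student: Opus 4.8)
The plan is to establish two things and combine them: that the objective $R(a,b,\epsilon)\cdot T$ is a convex function of the decision variables, and that, once the association $\boldsymbol{\chi}$ is held fixed, the region described by constraints (\ref{p2_1})--(\ref{p2_7}) is a convex set; minimizing a convex function over a convex feasible set is by definition a convex program. The objective part is immediate from what is already available: Lemma~\ref{lem2} gives that $1/\bigl(R(a,b,\epsilon)\cdot T\bigr)$ is positive and concave on the relevant domain, and Lemma~\ref{lem1} then turns this into convexity of its reciprocal $R(a,b,\epsilon)\cdot T$. So the real work is entirely in the constraints.

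I would treat the constraints block by block. With $\boldsymbol{\chi}$ frozen, constraints (\ref{p2_5}), (\ref{p2_6}) and (\ref{p2_7}) involve only the (now fixed) association variables, so they reduce to a feasibility condition on $\boldsymbol{\chi}$ and drop out of the optimization; moreover $t_{n\to m}^{com}=\sum_m\chi_{n,m}\,d_n/r_{n,m}$ collapses to the single term $d_n/r_{n,m}$ for the unique server $m$ with $\chi_{n,m}=1$, and the corresponding bandwidth $B_n$ becomes a fixed number. Constraints (\ref{p2_3}) and (\ref{p2_4}) are box constraints, hence affine and convex. The substance lies in (\ref{p2_1}) and (\ref{p2_2}): I would first record that $t_n^{cmp}=C_nD_n/f_n$ is convex in $f_n$ (Lemma~\ref{lem1}, since $f_n\mapsto f_n$ is positive and affine on $f_n>0$), that $t_{n\to m}^{com}=d_n/\bigl(B_n\log_2(1+g_{n,m}p_n/N_0)\bigr)$ is convex in $p_n$ (Lemma~\ref{lem1} again, since $p_n\mapsto\log_2(1+g_{n,m}p_n/N_0)$ is positive and concave on $p_n>0$), and that $t_{m\to c}^{com}$ is a constant. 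It then remains to show that the left-hand sides $a\,t_n^{cmp}+t_{n\to m}^{com}-\tau_m$ of (\ref{p2_2}) and $b\,\tau_m+t_{m\to c}^{com}-T$ of (\ref{p2_1}) are convex, so that each constraint cuts out a convex sublevel set and the feasible set is a finite intersection of convex sets.

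The step I expect to be the main obstacle is exactly those products of two decision variables — $a\,t_n^{cmp}=aC_nD_n/f_n$ in (\ref{p2_2}) and $b\,\tau_m$ in (\ref{p2_1}) — since a product of variables is not convex in general, so this cannot be a one-line appeal to the earlier lemmas. I would attack it either by verifying convexity of these specific combined forms directly (inspecting the Hessian on the positive orthant, if necessary after a change of variables such as introducing $1/f_n$ as a new unknown), or by exploiting the block structure that the solution algorithm already relies on, alternately fixing $(a,b)$ and $(\boldsymbol{f},\boldsymbol{p})$ so that each resulting subproblem removes one layer of coupling (fixing $\boldsymbol{f},\boldsymbol{p}$ makes $t_n^{cmp}$ and $t_{n\to m}^{com}$ constants, leaving (\ref{p2_2}) affine in $(a,\boldsymbol{\tau})$, while fixing $(a,b)$ leaves a convex problem in $(\boldsymbol{f},\boldsymbol{p},T,\boldsymbol{\tau})$). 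Once the left-hand sides of (\ref{p2_1})--(\ref{p2_2}) are certified convex, the proof closes with the standard fact that minimizing a convex objective over a finite intersection of convex constraint sets is a convex optimization problem, which is the assertion of Lemma~\ref{lem3}.
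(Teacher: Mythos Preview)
Your overall skeleton coincides with the paper's: the paper proves Lemma~\ref{lem3} by (i) asserting that constraints (\ref{p2_1})--(\ref{p2_4}) are convex and (ii) invoking Lemmas~\ref{lem1} and~\ref{lem2} to conclude that the objective $R(a,b,\epsilon)\cdot T$ is convex ``with respect to $a,b$.'' That is the entire argument; there is no constraint-by-constraint analysis, and in particular the paper does not single out the bilinear terms $b\,\tau_m$ and $a\,t_n^{cmp}=aC_nD_n/f_n$ that you flag.

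Where you go beyond the paper is precisely the right place, but your proposed resolutions do not close the gap you have identified. A direct Hessian check on $b\,\tau_m$ gives the matrix $\begin{pmatrix}0&1\\1&0\end{pmatrix}$ with eigenvalues $\pm 1$, so (\ref{p2_1}) is \emph{not} jointly convex in $(b,\tau_m)$; likewise the Hessian of $a/f_n$ has determinant $-1/f_n^4<0$, so (\ref{p2_2}) is not jointly convex in $(a,f_n)$, and the substitution $u_n=1/f_n$ merely replaces $a/f_n$ by the bilinear $a\,u_n$. Your second route---exploiting the block structure by alternately freezing $(a,b)$ and $(\boldsymbol{f},\boldsymbol{p},T,\boldsymbol{\tau})$---does yield convex \emph{subproblems} (and this is exactly what the paper's Algorithm~\ref{algorithm:lagrangian} relies on), but block convexity is strictly weaker than joint convexity and does not establish Lemma~\ref{lem3} as stated. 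In short: you have matched the paper's proof and then uncovered a genuine difficulty that the paper sidesteps by assertion; neither of your suggested fixes repairs it, because the difficulty is real rather than merely technical.
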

\begin{proof}

The constraints (\ref{p2_1}),~(\ref{p2_2}),~(\ref{p2_3}) and (\ref{p2_4}) are convex. Therefore, the convexity of problem (\ref{p2}) depends on the objective function of problem (\ref{p2}). From Lemmas~\ref{lem1} and~\ref{lem2}, we can conclude that the objective function of problem (\ref{p2}) is a convex function with respect to $a,b$.
% First, the we calculate the Hessian matrix of the objective function. The first order partial derivative with respect to $a$ and $b$ are:
% \begin{align}
%     \frac{\partial\mathcal{L}}{\partial a}=&-\frac{CTb \ln (1/\epsilon) e^{-\frac{b}{\gamma}(1-e^{{-\frac{a}{\zeta}}}){-\frac{a}{\zeta}}}}{\gamma\zeta{(1-e^{-\frac{b}{\gamma}(1-e^{{-\frac{a}{\zeta}}})})}^2},\\
%     \frac{\partial \mathcal{L}}{\partial b}=&-\frac{CT \ln (1/\epsilon) e^{-\frac{b}{\gamma}(1-e^{{-\frac{a}{\zeta}}})}(1-e^{{-\frac{a}{\zeta}}})}{\gamma{(1-e^{-\frac{b}{\gamma}(1-e^{{-\frac{a}{\zeta}}})})}^2}.
% \end{align}
% Then the second order partial derivative with respect to $a$ and $b$ can be given by:
% \begin{align}
% \frac{\partial^2\mathcal{L}}{\partial a^2}=&\frac{CTb \ln (1/\epsilon) e^{-\frac{b}{\gamma}(1-e^{{-\frac{a}{\zeta}}})-2{\frac{a}{\zeta}}}}{\gamma\zeta^2{(e^{-\frac{b}{\gamma}(e^{{-\frac{a}{\zeta}}}-1)}-1)}^3}\cdot\nonumber\\(\frac{b}{\gamma}&e^{-\frac{b}{\gamma}(1-e^{{-\frac{a}{\zeta}}})}+e^{\frac{a}{\zeta}}-e^{-\frac{b}{\gamma}(1-e^{{-\frac{a}{\zeta}}})+\frac{a}{\zeta}}+\frac{b}{\gamma}),\\
% \end{align}
\end{proof}

The Lagrange function of problem (\ref{p2}) can be given by:
\begin{align}
    \mathcal{L}(a,b,&T,\boldsymbol{f},\boldsymbol{p},\boldsymbol{\lambda},\boldsymbol{\mu},\boldsymbol{\beta},\boldsymbol{\nu})=R(a,b,\epsilon) \cdot T  \nonumber \\
    &+\sum_{m \in \mathcal{M}} \lambda_m(b\tau_m  + t_{m \to c}^{com}-T) \nonumber \\
    &+\sum_{n \in  \mathcal{N}_m}\mu_n(a t_n^{cmp} + t_{n \to m}^{com}-\tau_m)\nonumber \\
    &+\sum_{n \in  \mathcal{N}_m}\beta_n(f_n-f_n^{max}) \nonumber \\
    &+\sum_{n \in  \mathcal{N}_m}\nu_n(p_n-p_n^{max}),
\end{align}
where $\lambda_m$ and $\mu_n$ are Lagrangian multipliers associated with the constraints (\ref{p2_1}) and (\ref{p2_2}). Then the dual function of problem (\ref{p2}) is $$g(\boldsymbol{\lambda},\boldsymbol{\mu},\boldsymbol{\beta},\boldsymbol{\nu})= \min_{a,b,T,\boldsymbol{f},\boldsymbol{p}}\mathcal{L}(a,b,T,\boldsymbol{f},\boldsymbol{\lambda},\boldsymbol{\mu},\boldsymbol{\beta},\boldsymbol{\nu}).$$ 
According to the Karush-Khun-Tucker (KKT) conditions, the optimal solution of problem (\ref{p2}) can be obtained by taking the partial derivatives of Lagrange function $\mathcal{L}(a,b,\boldsymbol{f},\boldsymbol{p},\boldsymbol{\lambda},\boldsymbol{\mu})$ with respect to variable $a$ and $b$:
\begin{align}\label{partial derivatives}
    \frac{\partial \mathcal{L}}{\partial a}=&-\frac{CTb \ln (1/\epsilon) e^{-\frac{b}{\gamma}(1-e^{{-\frac{a}{\zeta}}}){-\frac{a}{\zeta}}}}{\gamma\zeta{(1-e^{-\frac{b}{\gamma}(1-e^{{-\frac{a}{\zeta}}})})}^2}\nonumber \\+&\sum_{n \in  \mathcal{N}_m}\mu_n t_n^{cmp}=0,\nonumber\\
    \frac{\partial \mathcal{L}}{\partial b}=&-\frac{CT \ln (1/\epsilon) e^{-\frac{b}{\gamma}(1-e^{{-\frac{a}{\zeta}}})}(1-e^{{-\frac{a}{\zeta}}})}{\gamma{(1-e^{-\frac{b}{\gamma}(1-e^{{-\frac{a}{\zeta}}})})}^2}\nonumber\\
    &+\sum_{m \in  \mathcal{M}}\lambda_m \tau_m =0.
    % \frac{\partial \mathcal{L}}{\partial T}=&R(a,b,\epsilon)-\sum_{m \in  \mathcal{M}}\lambda_m = 0,\nonumber\\
    % \frac{\partial \mathcal{L}}{\partial \tau_m}=&\sum_{n \in  \mathcal{N}_m}\mu_n = 0.
    % \frac{\partial \mathcal{L}}{\partial f_n}=&-\sum_{n \in  \mathcal{N}_m}\mu_n a \frac{C_n D_n}{f_n^2}=0.
    % \lambda^*_m(b^*\tau_m  + t_{m \to c}^{com}-T)&=0, \forall m \in \mathcal{M},\\
    % \mu^*_n(a^* t_n^{cmp} + t_{n \to m}^{com}-\tau_m)&=0,\forall n \in \mathcal{N}_m.
\end{align}

\subsubsection{The optimal solution of local CPU frequency and transmission power} 
From constraints (\ref{p2_1}) and (\ref{p2_3}), it can be seen that it is always efficient to utilize the maximum CPU frequency $f_n^{max}, \forall n \in  \mathcal{N}_m$. Besides, from constraints (\ref{p2_2}) and (\ref{p2_4}), it can be seen that minimum time can be achieved if UE uses the maximum transmission power $p_n^{max}, \forall n \in  \mathcal{N}_m$. So the optimal solution of local CPU frequency and transmission power can be given by $f_n^*=f_n^{max}, p_n^*=p_n^{max}, \forall n \in  \mathcal{N}_m$.

\subsubsection{The optimal solution of local and edge iteration times ($a^*,b^*$) within one round communication}Let $A=CT \ln(1/\epsilon)$ and $Y=1-e^{-\frac{a}{\zeta}}$, we can get:
\begin{align}
    a^*=&\zeta \ln\bigg(\frac{\sum_{m \in  \mathcal{M}}\lambda_m \tau_m}{\zeta \sum_{n \in  \mathcal{N}_m}\mu_n t_n^{cmp}}+1\bigg),\label{a_}\\
    b^*=&\frac{\gamma \ln\Big(\frac{AY - \sqrt{4AY\sum_{m \in  \mathcal{M}}\lambda_m \tau_m +A^2Y^2}}{2\sum_{m \in  \mathcal{M}}\lambda_m \tau_m}+1\Big)}{-Y} \label{b_} .
\end{align}
% \begin{lem}
% Problem (\ref{p2}） with fixed $a>a^*$ is always feasible, while it with fixed $a<a^*$ is not feasible.
% \end{lem}
% \begin{proof}
% Assume that $(\overline{a},\overline{b})$ is a feasible solution of problem (\ref{p2}) with $\overline{a}>a^*$. Then we can always construct a feasible solution $(\overline{a},b^*)$ to problem (\ref{p2}) by checking all the constraints. When there is a solution $(\overline{a},\overline{b})$ with $\overline{a}<a^*$, it contradicts the fact that $(a^*,b^*)$ is the optimal solution since we would obtain a smaller $T$ from $(\overline{a},b^*)$.
% \end{proof}

% From Lemma 2, the optimal solution of $a$ can be obtained by bisection method. When the hierarchical federated learning degenerates to a traditional two-layer federated learning, the maximum value of $a$ is given by $a_{max}=\gamma \ln \frac{1}{\epsilon}$, while the minimum value of $a$ is $a_{min}=1$. The optimal solution $a^*$ must lie in the interval $(a_{min},a_{max})$. In the bisection method, first we compute the midpoint of this interval $a_{mid}=\frac{a_{min}+a_{max}}{2}$. Then, if problem (\ref{p2}) with $\overline{a}=a_{mid}$ is feasible, $a^*$ must lie in $(a_{min},a_{mid})$. If problem (\ref{p2}) with $\overline{a}=a_{mid}$ is infeasible, $a^*$ must lie in $(a_{mid},a_{max})$. At each step the bisection method selects a subinterval that is guaranteed to be the correct new interval for next step. We continue the process until the interval is small sufficiently, then the optimal solution of $a^*$ can be found.  
% feasibility check

\subsubsection{Solution of $T$ and $\boldsymbol{\tau}$}
The optimal solution of $a^*,b^*$ have been obtained under given accuracy $\epsilon$. According to problem (\ref{p1}), the optimal solution of $\boldsymbol{\tau}$ and $T$ can be given by:
\begin{align}
\tau_m^* &= \max_{n \in \mathcal{N}_m}\{a^* \cdot t_n^{cmp} + t_{n \to m}^{com}\},\\
T^*&=\max_{m \in \mathcal{M}}\{b^* \cdot \max_{n \in \mathcal{N}_m}\{a^* \cdot t_n^{cmp} + t_{n \to m}^{com}\} + t_{m \to c}^{com}\}.
\end{align}

\subsubsection{Lagrange multipliers update}
The Lagrange dual variables $\boldsymbol{\lambda},\boldsymbol{\mu},\boldsymbol{\beta},\boldsymbol{\nu}$ can be obtained by solving the Lagrange dual problem of problem (\ref{p2}), which can be expressed as follows:
\begin{subequations} \label{p_dual}
\begin{align}
    \max_{\boldsymbol{\lambda},\boldsymbol{\mu},\boldsymbol{\beta},\boldsymbol{\nu}}~ &g(\boldsymbol{\lambda},\boldsymbol{\mu},\boldsymbol{\beta},\boldsymbol{\nu}), \tag{\ref{p_dual}}\\
    \text{s.t. }
    &\boldsymbol{\lambda}\succeq 0, \boldsymbol{\mu}\succeq 0, \boldsymbol{\beta}\succeq 0, \boldsymbol{\nu}\succeq 0.
\end{align}
\end{subequations}
The Lagrange dual problem is a convex problem, which can be solved by subgradient projection method. The subgradients of $g(\boldsymbol{\lambda},\boldsymbol{\mu},\boldsymbol{\beta},\boldsymbol{\nu})$ can be given by:
\begin{align}\label{dual variables_1}
    \nabla \lambda_m&=b^*\tau_m^*  + t_{m \to c}^{com}-T^*,~\forall m \in \mathcal{M},\nonumber\\
    \nabla \mu_n&=a^* t_n^{cmp} + t_{n \to m}^{com}-\tau_m^*,~\forall n \in \mathcal{N}_m,\nonumber\\
    \nabla \beta_n&=f_n-f_n^{max},~\forall n \in \mathcal{N}_m,\nonumber\\
    \nabla \nu_n&=p_n-p_n^{max},~\forall n \in \mathcal{N}_m.
\end{align}
Having obtained the subgradients of $\lambda_m$, $\mu_n$, $\beta_n$, $\nu_n$, the Lagrange multipliers can be obtained by subgradients projection method in an iterative method as follows:
\begin{align}\label{dual variables_2}
    \lambda_m(t+1)&=\lambda_m(t)-\eta\nabla\lambda_m(t),\nonumber\\
    \mu_n(t+1)&=\mu_n(t)-\eta\nabla\mu_n(t),\nonumber\\
    \beta_n(t+1)&=\beta_n(t)-\eta\nabla\beta_n(t),\nonumber\\
    \nu_n(t+1)&=\nu_n(t)-\eta\nabla\nu_n(t),
\end{align}
where $\eta$ is the step size and $t$ denotes the number of iterations. The Lagrangian dual variables $\lambda_m$, $\mu_n$, $\beta_n$ and $\nu_n$ is updated according to the subgradients projection method in an iterative way in (\ref{dual variables_2}). After obtaining the Lagrangian dual variables, we substitute them into (\ref{partial derivatives}) to acquire the optimal value of $a$ and $b$. We then substitute the optimal solution of $a$ and $b$ into (\ref{dual variables_2}) to obtain the new Lagrangian dual variables in a cyclic iterative method.
Algorithm \ref{algorithm:lagrangian} explains the process. The complexity of Algorithm \ref{algorithm:lagrangian} is $\mathcal{O}(K\ln(\frac{1}{\epsilon_2}))$ with accuracy $\epsilon_2$ by using subgradients projection method.

\begin{algorithm}[t]
\caption{Optimal solution of $a$, $b$ and Lagrangian dual variables} \label{algorithm:lagrangian}
    \begin{algorithmic}[1]
        \STATE {Initialize Lagrangian dual variables $\boldsymbol{\lambda}(0)$, $\boldsymbol{\mu}(0)$, $\boldsymbol{\beta}(0)$ and $\boldsymbol{\nu}(0)$}
        \STATE{$t \gets 0$}
        \WHILE{accuracy $\epsilon_2$ is not achieved}
            \STATE{update $a$, $b$ in (\ref{a_}) and (\ref{b_}) using $\boldsymbol{\lambda}(t)$, $\boldsymbol{\mu}(t)$, $\boldsymbol{\beta}(t)$ and $\boldsymbol{\nu}(t)$.}
            \STATE{update Lagrangian dual variables $\boldsymbol{\lambda}(t+1)$, $\boldsymbol{\mu}(t+1)$, $\boldsymbol{\beta}(t+1)$ and $\boldsymbol{\nu}(t+1)$ using (\ref{dual variables_1}) and (\ref{dual variables_2}).}
            % \IF{$t=t_{max}$}
            % \STATE{break.}
            % \ELSE
            \STATE {$t \gets t+1$.}
            %\ENDIF
        \ENDWHILE
    \end{algorithmic}
\end{algorithm}
%Algorithm (\ref{algorithm_2}) explains the process. The complexity of Algorithm (\ref{algorithm_2}) is $\mathcal{O}(K\ln(\frac{1}{\epsilon_2}))$ with accuracy $\epsilon_2$ by using subgradients projection method.

% \begin{algorithm}[!]
% \caption{Optimal solution of $a$, $b$ and Lagrangian dual variables} \label{(algorithm_2}
%     \begin{algorithmic}[1]
%         \STATE {Initialize Lagrangian dual variables $\boldsymbol{\lambda}(0)$, $\boldsymbol{\mu}(0)$, $\boldsymbol{\beta}(0)$ and $\boldsymbol{\nu}(0)$}
%         \STATE{$t \gets 0$}
%         \WHILE{accuracy $\epsilon_2$ is not achieved}
%             \STATE{update $a$, $b$ in (\ref{a_}) and (\ref{b_}) using $\boldsymbol{\lambda}(t)$, $\boldsymbol{\mu}(t)$, $\boldsymbol{\beta}(t)$ and $\boldsymbol{\nu}(t)$.}
%             \STATE{update Lagrangian dual variables $\boldsymbol{\lambda}(t+1)$, $\boldsymbol{\mu}(t+1)$, $\boldsymbol{\beta}(t+1)$ and $\boldsymbol{\nu}(t+1)$ using (\ref{dual variables_1}) and (\ref{dual variables_2}).}
%             % \IF{$t=t_{max}$}
%             % \STATE{break.}
%             % \ELSE
%             \STATE {$t \gets t+1$.}
%             %\ENDIF
%         \ENDWHILE
%     \end{algorithmic}
% \end{algorithm}

\subsection{Solution of sub-problem \uppercase\expandafter{\romannumeral2}} 
In this part, we present the time-minimized UE-to-edge association scheme. With the optimal $a$, $b$, $\boldsymbol{f}$ and $\boldsymbol{p}$, the UE-to-edge association problem is equivalent to the following problem:
\begin{subequations}\label{p3}
\begin{align}
    \min_{\boldsymbol{\chi}}~ &\max_{n \in \mathcal{N}}\{a t_n^{cmp} + t_{n \to m}^{com}\},\tag{\ref{p3}}\\
    \text{s.t.~}
        &\chi_{n,m} \in \{0,1\},\forall n\in \mathcal{N},\forall m\in \mathcal{M}, \label{p3_1}\\
        &\sum_m \chi_{n,m} = 1,\forall n\in \mathcal{N},\label{p3_2}\\
        & \sum_n \chi_{n,m} B_n \leq \mathcal{B}, \forall m \in \mathcal{M}.\label{p3_3}
\end{align}
\end{subequations}
By introducing a slack variable $z$, problem (\ref{p3}) can be reformulated into an epigraph form as follows:
\begin{subequations}\label{p4}
\begin{align}
    \min_{\boldsymbol{\chi}}~ &z,\tag{\ref{p4}}\\
    \text{s.t.~}
        &a t_n^{cmp} + t_{n \to m}^{com} \leq z, \forall n \in \mathcal{N},\label{p4_1}\\
        &\chi_{n,m} \in \{0,1\},\forall n\in \mathcal{N},\forall m\in \mathcal{M}, \label{p4_2}\\
        &\sum_m \chi_{n,m} = 1,\forall n\in \mathcal{N},\label{p4_3}\\
        & \sum_n \chi_{n,m} B_n \leq \mathcal{B}, \forall m \in \mathcal{M}.\label{p4_4}
\end{align}
\end{subequations}
We should notice that when the problem (\ref{p4}) reaches the optimality, the maximum of the left-hand side of (\ref{p4_1}) is equal to $z$. Otherwise (i.e., the maximum of the left-hand side of (\ref{p4_1}) is less than $z$), we could decrease $z$ since it would bring a smaller objective value for (\ref{p4}).

Problem (\ref{p4}) is now a \textit{mixed integer linear programming} (MILP) problem, which can be solved by branch-and-bound algorithm. However, the computational complexity of branch-and-bound algorithm is exponential in general, and thereby cannot be implemented in practice. To solve the UE-to-edge association problem practically, we then propose a more efficient algorithm. 

% {\color{red}
% \begin{prop}
% Given any feasible UE-to-edge association, if two UEs $n_1$ and $n_2$ associating with difference edge servers switch their edge servers, the local computing and local model transmission time of UE $n_1$ and $n_2$ will be increased and reduced, respectively.
% \end{prop}

% \begin{proof}
% If two UEs switch their associating edge servers, the only variable in equation (\ref{r_n,m}) and (\ref{t_n_to_m}) that has changed is uplink channel SNR $g_{n,m}p_n/N_0$. Thus, 
% \end{proof}
% }
In the proposed algorithm, we first identify the UEs with the largest SNR for each edge server successively, under the bandwidth constraint (\ref{p4_4}). Since each UE can only be associated with one edge server, we need to remove one edge server if two edge servers are associated with one UE. 
Specifically, let the set of UEs chosen by edge server $m_1$ and $m_2$ be $\mathcal{N}_{m_1}$ and $\mathcal{N}_{m_2}$. If a UE $n$ is in both $\mathcal{N}_{m_1}$ and $\mathcal{N}_{m_2}$, then the algorithm will compare the uplink channel SNR between UEs that are not in $\mathcal{N}_{m_1}$, $\mathcal{N}_{m_2}$ and edge server $m_1$, $m_2$, denoted by $\{ (n,m) \mid n \in \mathcal{N}\setminus(\mathcal{N}_{m_1}\cup\mathcal{N}_{m_2}),m \in \{m_1,m_2\}\}$. 
Then, the UE $n'$ and edge server $m'$ with largest uplink channel SNR $g_{n',m'}p_{n'}/N_0$ are chosen. If $m'=m_1$, then we remove UE $n$ from $m_1$ and associate $n'$ with $m_1$. Otherwise, we remove UE $n$ from $m_2$ and associate $n'$ with $m_2$. The above process proceeds until the last edge server finishes. The main procedures of the proposed algorithm are summarized in Algorithm \ref{algorithm:ue_to_edge}. In each round, a maximum of $\mathcal{B}/B_n$ comparisons are made. Hence, the complexity of proposed algorithm is $\mathcal{O}(m\mathcal{B}/B_n)$ in the worst case.

\begin{algorithm}[!]
\caption{Time-minimized UE-to-edge association algorithm} \label{algorithm:ue_to_edge}
    \begin{algorithmic}[1]
        \STATE {Initialize $\{\chi_{n,m}\}$ as empty and sort $g_{n,m}p_n/N_0$ by $m \in \mathcal{M}$.}
        \FOR{$i \in \{1,2,...,M\}$}
        \STATE{choose the $N_m$ UEs with largest $g_{n,m}p_n/N_0$, denote the set by $\mathcal{N}_{m_i}$, set $\chi_{n,i} = 1, \forall n \in \mathcal{N}_{m_i}$.}
        \WHILE{$\exists~ n$, $m_j$, $\chi_{n,m_i}==1$ and $\chi_{n,m_j}==1$ ($i > j$)}
        \STATE{$(n',m')=\mathop{\arg\max}\limits_{n \in \mathcal{N}\setminus(\mathcal{N}_{m_i}\cup\mathcal{N}_{m_j}),m \in \{m_i,m_j\}} g_{n,m}p_n/N_0$.}
        \STATE{set $\chi_{n,m'} = 0$.}
        \STATE{set $\chi_{n',m'} = 1$.}
        \ENDWHILE
        \ENDFOR
    \end{algorithmic}
\end{algorithm}

\section{Numerical Results} \label{section:numerical}
\begin{figure}[tb]
\centering
\centering\includegraphics[width=0.5\textwidth]{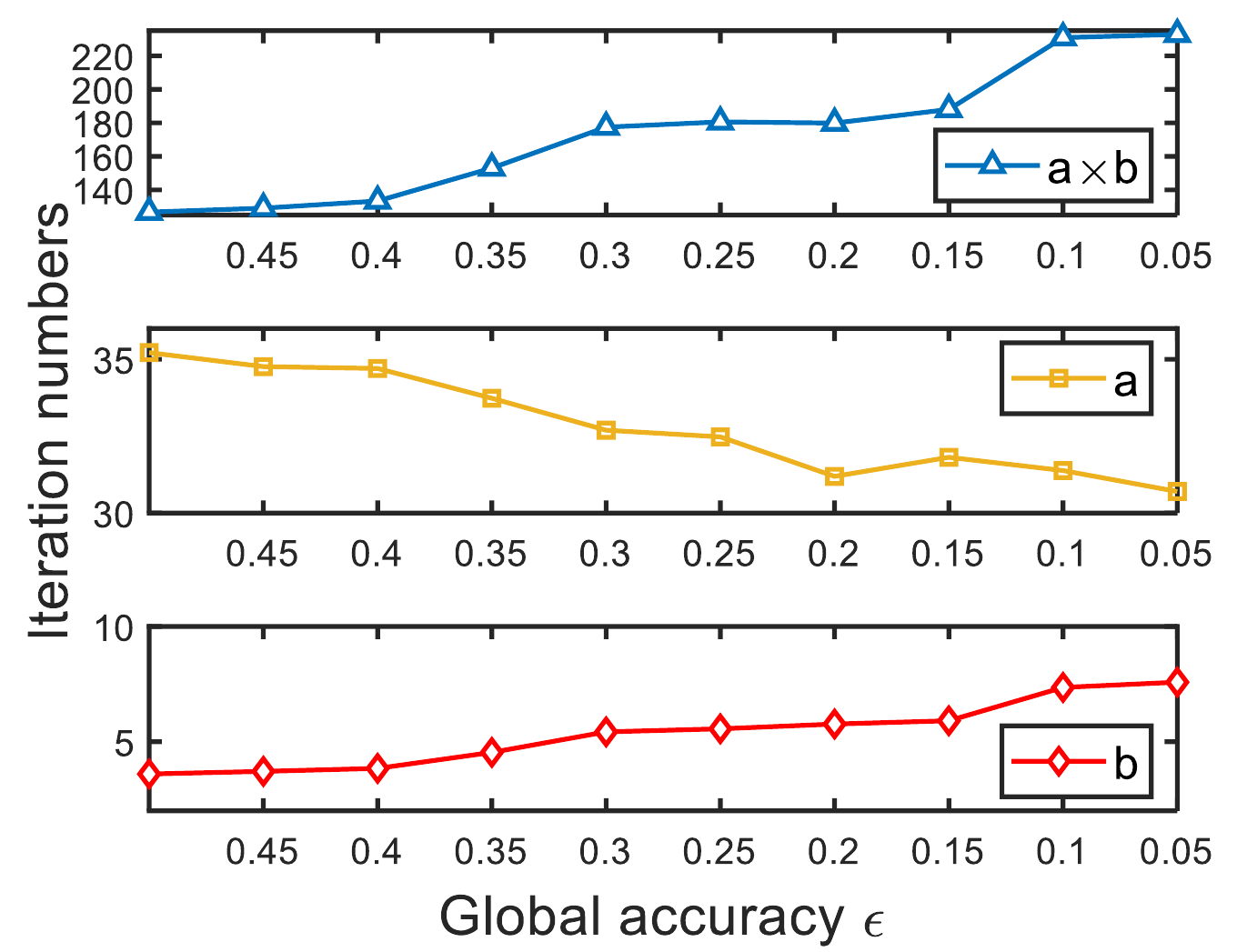}
\caption{Iterations under different global accuracy.}
\label{fig_acc}
\end{figure}

In this section, numerical experiments are conducted to verify the performance of our solutions. The advantages of hierarchical federated learning system over edge-based federated learning system and cloud-based federated learning system are investigated in~\cite{liu2020client}. Hence, in this paper, we focus on the iteration counts of local UEs and edge servers under different conditions, as well as UE-to-edge association.

\subsection{Experiment Settings}
For simulations, we consider a hierarchical federated learning system with multiple user equipments, edge servers and one cloud server. The user equipments are deployed in a square area of size 500 m $\times$ 500 m with the edge servers located in the center and all the edge servers are deployed in an area with the cloud server located in the center.  
% For the parameters in the optimization problem regarding Assumption \ref{Assumption}, $L$-smooth and $\beta$-strongly convex, we follow the experiment setting in~\cite{tran2019federated}. 
For machine learning tasks, we consider a classification task using standard dataset MNIST. For the training model, we use LeNet. 
The constants $\gamma$, $\zeta$ and $\delta$ are set to random integers between 1 to 10. 
% The notation $g_{n,m}$ denotes the reduction in power density (attenuation) of a wireless signal from UE $n$ to edge server $m$. 
For simplicity, we use the free-space path loss model in~\cite{goldsmith2005wireless}.
Then with \texttt{wavelength} being the wavelength of the wireless signal and \texttt{distance} being the distance between UE $n$ and edge server $m$, it holds that $g_{n,m} = ( \frac{\texttt{wavelength}}{4 \pi\times \texttt{distance}} )^2$. 
%The same as~\cite{chen2020joint}, 
We set the frequency at $28 \text{GHz}$ so that $\texttt{wavelength} = \frac{3 \times 10^8 }{28 \times 10^9} = \frac{3}{280} \text{m}$. 
Then $g_{n,m} = ( \frac{3/280}{4 \pi \times \texttt{distance}} )^2$. 
The maximum CPU frequency $f_n^{max}$ is 2 GHz, and maximum transmission power $p_n^{max}$ is 10 dBm for each device.
For the parameters in the optimization problem regarding Assumption \ref{Assumption}, $L$-smooth and $\beta$-strongly convex, we follow the experiment setting in~\cite{tran2019federated}.
% The experimental settings are shown in Table \ref{table:para}.

% \begin{table}[]
%     \centering
%     \begin{tabular}{|c|c|}\hline  
%           \textbf{Parameter} & \textbf{Value}  \\\hline
%           c & c \\\hline  
%     \end{tabular}
%     \caption{Parameter Setting}
%     \label{table:para}
% \end{table}

\subsection{The optimal number of local computations and edge aggregations}
Firstly, we fix the number of UEs and edge servers. We deploy 1 cloud server, and 5 edge servers and each edge server is associated with 20 UEs. To achieve a given global accuracy $\epsilon$ within minimum time, the local iterations and edge iterations needed between two communication rounds are shown in Fig.~\ref{fig_acc}. As  $\epsilon$ decreases (i.e., higher machine learning model accuracy is required), $a$ decreases while $b$ increases, and the value of $a \times b$ (the number of local iterations in one cloud round) increases. It means that in order to obtain a more accurate global model within minimum time, edge servers need to run more edge iterations while UEs run fewer local iterations within one communication round. In the simulation experiment, we train LeNet on MNIST dataset with each edge server associated with 10 UEs. It can be observed from Fig. \ref{fig_training1} that under different required test accuracy, the optimal value of $a$ and $b$ differs. For example, in Fig. \ref{fig_training1}, if the required machine learning model accuracy is between 0.88 to 0.89, then $a=35, b=5$ is the optimal value. If the required machine learning model accuracy is beyond 0.92, then $a=30, b=7$ is the optimal value.

\begin{figure}[t]
\centering
\centering\includegraphics[width=0.5\textwidth]{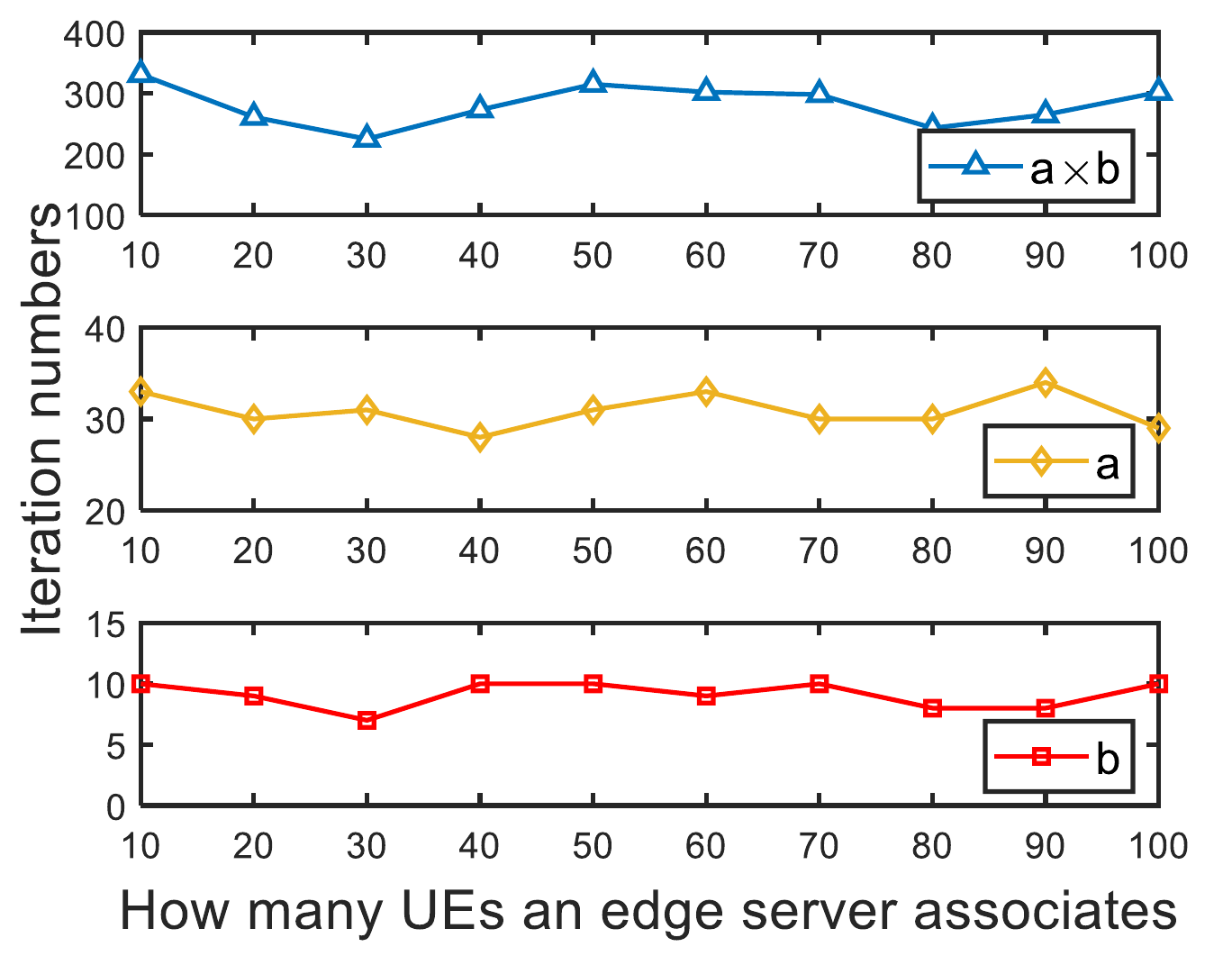}
\caption{Iterations under different numbers of UEs.}
\label{fig_ues}
\end{figure}

Next, we associate different numbers of UEs with each edge server from 10 UEs to 100 UEs. To obtain a fixed global accuracy in minimum time, the local iterations and edge iterations needed are shown in Fig. \ref{fig_ues}. As the number of UEs each edge server associates with increases, the number of local iterations and edge iterations exhibit no visible trend. That is because at the aggregation step, the weighted average scheme balances the variance among all the UEs. In Fig. \ref{fig_training2}, each edge server is associated with 20 UEs. It can be seen from Fig. \ref{fig_training2} that the optimal values of $a$ and $b$ are different when we require different machine learning model accuracy. Similar to the case when each edge server associates with 10 UEs, $a=35, b=5$ is the optimal value when the required model accuracy is between 0.88 to 0.89. If the required model accuracy is beyond 0.9, then $a=30, b=5$ is the optimal value. It also verifies the observation that the optimal value of local iteration counts and edge iteration counts have no correlation with the number of UEs edge server associates.

\begin{figure}[tb]
\centering\includegraphics[width=0.5\textwidth]{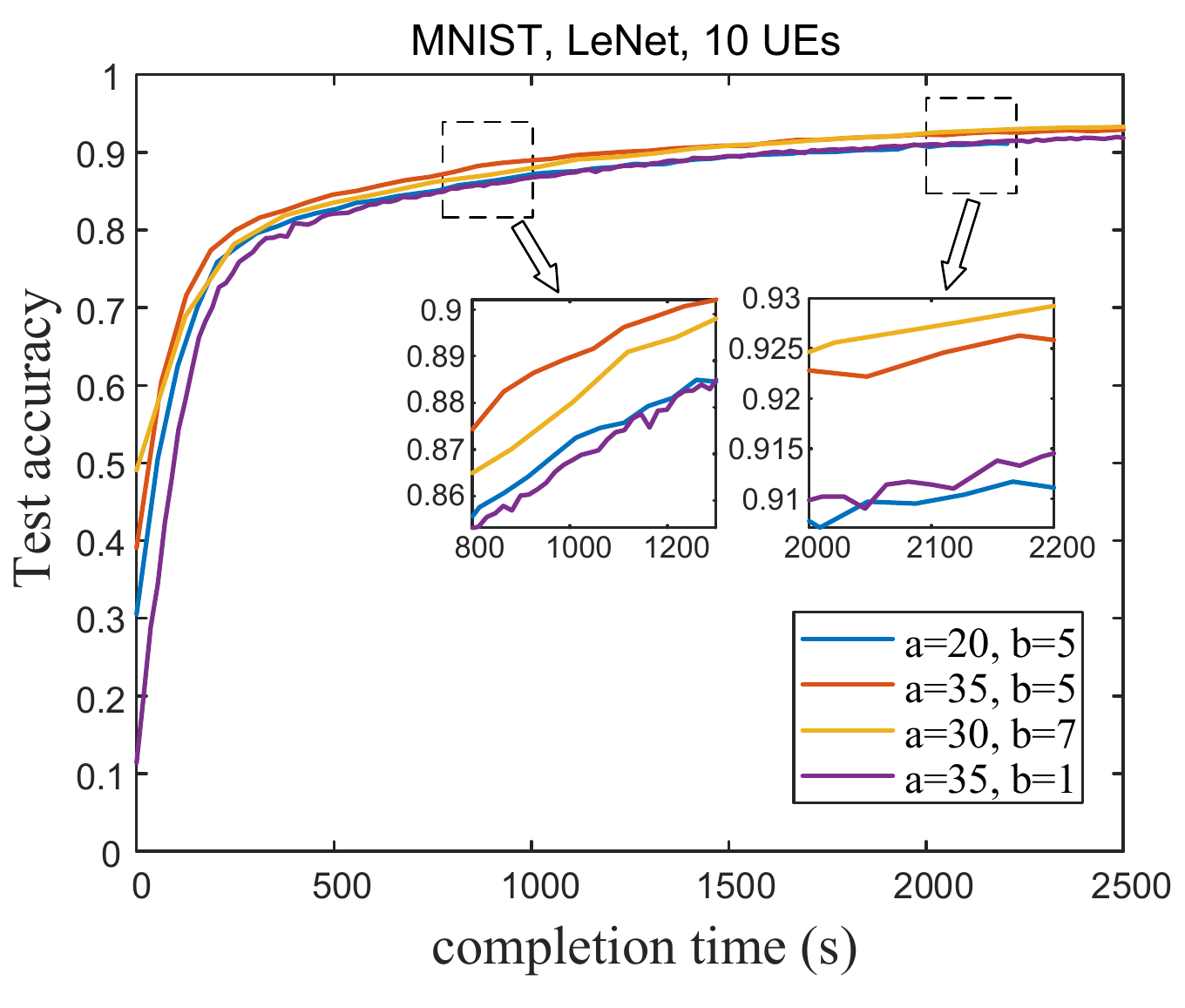}  
\caption{Test accuracy on MNIST w.r.t. the completion time when each edge server is associated with 10 UEs.}
\label{fig_training1}
\end{figure}

\begin{figure}[tb]
\centering{\includegraphics[width=0.5\textwidth]{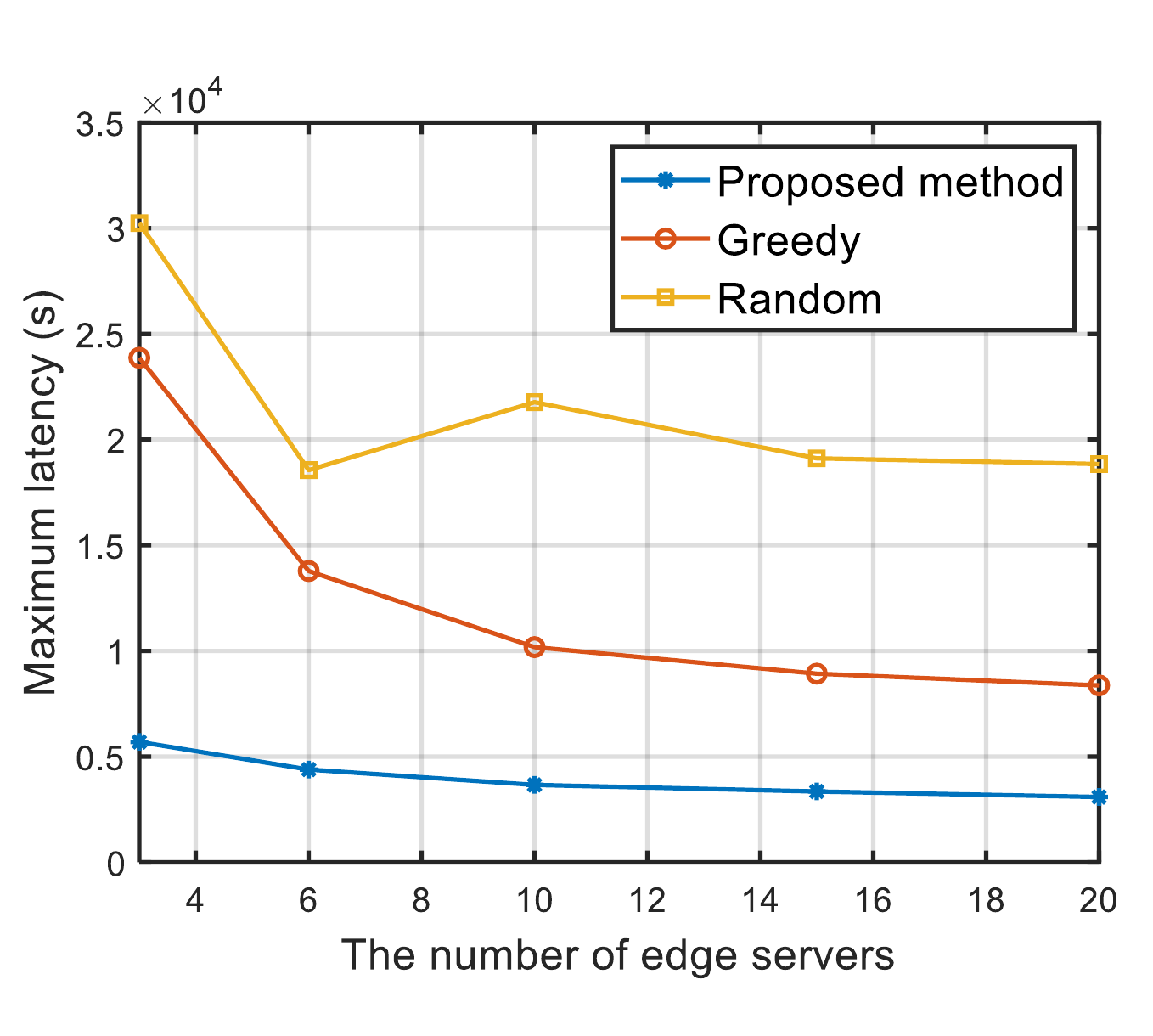}}
\caption{The maximum latency of 100 UEs under different numbers of edge servers.}
\label{fig_asso}
\end{figure}

\begin{figure}[t]
\centering\includegraphics[width=0.5\textwidth]{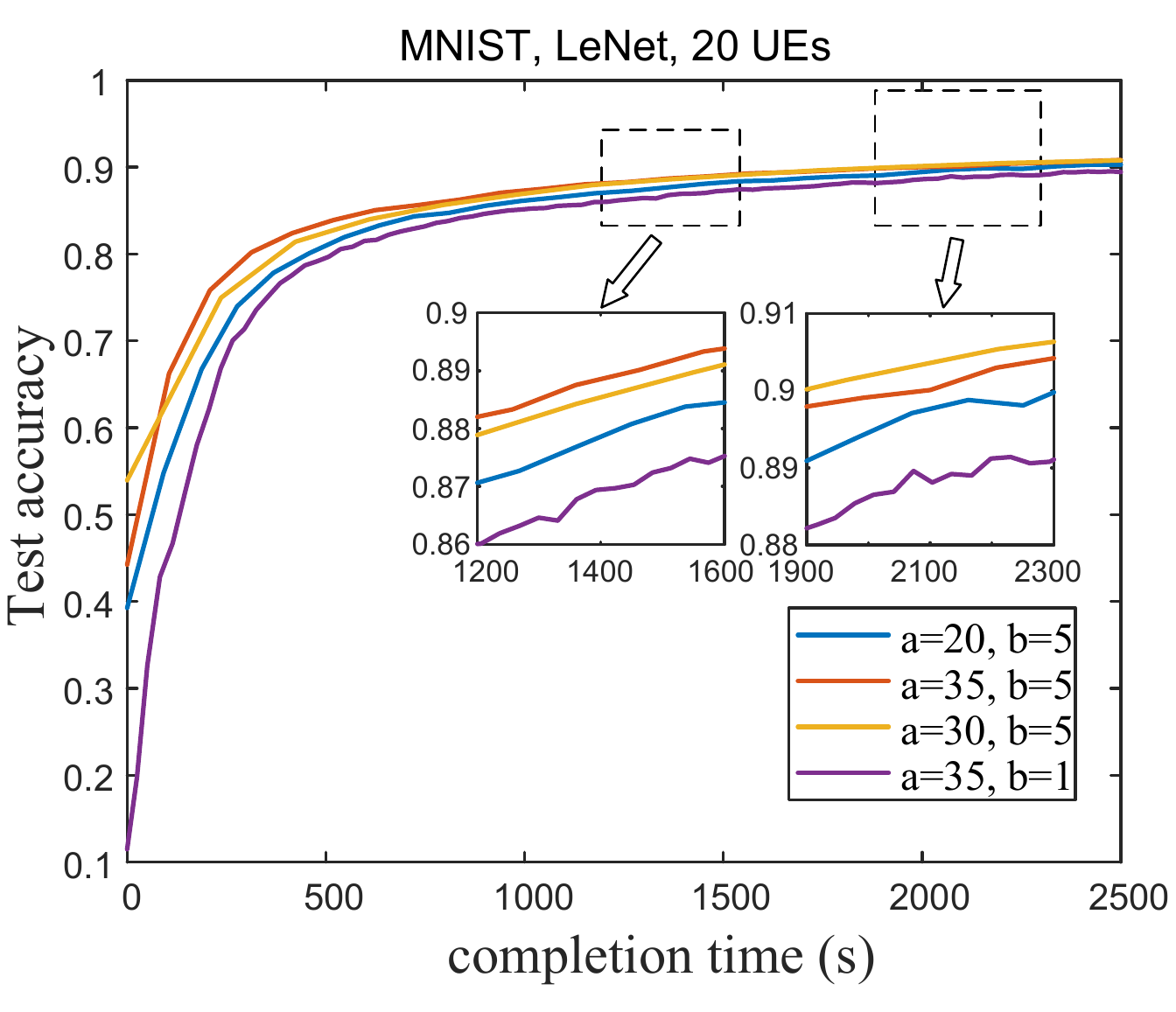}  
\caption{Test accuracy on MNIST w.r.t. the completion time when each edge server is associated with 20 UEs.}
\label{fig_training2}
\end{figure}

\subsection{The optimal UE-to-edge association}
In this part, we test three different UE-to-edge association strategies under the global accuracy requirement $\epsilon=0.25$. They are the proposed method, the greedy algorithm and the random UE-to-edge association strategy. 
\begin{itemize}
    \item \textbf{Greedy algorithm :} The greedy algorithm chooses the UEs available with maximum SNR under the bandwidth constraint for each edge server.
    \item \textbf{Random UE-to-edge association:} The random UE-to-edge association assigns UEs to all the edge servers randomly, under bandwidth constraint. 
\end{itemize}
Fig.~\ref{fig_asso} illustrates the system's maximum latency under different numbers of edge servers. The number of total UEs is 100. It can be found that the proposed method always achieves the lowest latency under different numbers of edge servers. The greedy algorithm always chooses the UEs with the maximum SNR available, thus outperforming the random association.
It is noticed that when the number of edge servers is smaller, the latency is higher. That is because when the number of edge servers is small, UEs have few choices. More UEs have to associate with edge servers whose SNR is high.

\section{Conclusion} \label{section:conclusion}
In this paper, we have investigated the problem of latency minimization in the setting of 3-layer hierarchical federated learning framework. 
In particular, we formulated a joint learning and communication problem, where we optimized local iteration counts and edge server iteration counts. To solve the problem, we studied the convexity of the problem.
Then we proposed an iterative algorithm to obtain the optimal solution for local counts and edge counts.
Besides, we proposed a UE-to-edge association strategy which aims to minimize the maximum latency of the system.
Simulation results show the performance of our solutions, where the global model converges faster under optimized number of local iterations and edge aggregations. The overall FL time is minimized with the proposed UE-to-edge association strategy.

\section*{Acknowledgement}

This research is supported in part by Nanyang Technological University Startup Grant; in part by the Singapore Ministry of Education Academic Research Fund under Grant Tier 1 RG97/20, Grant Tier 1 RG24/20 and Grant Tier 2 MOE2019-T2-1-176.

% Generated by IEEEtran.bst, version: 1.14 (2015/08/26)

\end{document}